\tikzset{
  varnode/.style={rectangle,outer sep=0mm},
  varnodenoperi/.style={rectangle,outer sep=-1mm},
  ourarrow/.style={>=stealth}, 
  ourarc/.style={>=stealth,thick,arc}}
\newtheorem{mydef}{Definition}
\newtheorem{mythm}{Theorem}
\newtheorem{myprop}{Proposition}
\newtheorem{myeg}{Example}
\newtheorem{mycor}{Corollary}
\setlist[enumerate]{label=(\roman*)}
\title{Cost Optimal Planning as Satisfiability}
\author{}
\author{Mohammad Abdulaziz\\
}
\begin{document}

\maketitle
\begin{abstract}
We investigate upper bounds on the length of cost optimal plans that are valid for problems with 0-cost actions.
We employ these upper bounds as horizons for a SAT-based encoding of planning with costs.
Given an initial upper bound on the cost of the optimal plan, we experimentally show that this SAT-based approach is able to compute plans with better costs, and in many cases it can match the optimal cost.
Also, in multiple instances, the approach is successful in proving that a certain cost is the optimal plan cost.
\end{abstract}

\providecommand{\insts}{}
\renewcommand{\insts}{\ensuremath{\Delta}}
\providecommand{\inst}{\ensuremath{\tvsal}}
\newcommand{\act}{\ensuremath{\pi}}
\newcommand{\asarrow}[1]{\vec{#1}}
\renewcommand{\vec}[1]{\overset{\rightarrow}{#1}}
\newcommand{\as}{\ensuremath{\vec{{\act}}}}
\newcommand{\asb}{\ensuremath{\vec{{\act_2}}}}

\newcommand{\etc}{\textit{etc.}}
\newcommand{\versus}{\textit{vs.}}
\newcommand{\ie}{i.e.}
\newcommand{\Ie}{I.e.}
\newcommand{\eg}{e.g.}
\newcommand{\michael}[1]{\textcolor{blue}{M: #1}}
\newcommand{\abziz}[1]{\textcolor{brown}{#1}}
\newcommand{\sublist}[2]{ \ensuremath{#1} \preceq\!\!\!\raisebox{.4mm}{\ensuremath{\cdot}}\; \ensuremath{#2}}
\newcommand{\subscriptsublist}[2]{\ensuremath{#1}\preceq\!\raisebox{.05mm}{\ensuremath{\cdot}}\ensuremath{#2}}
\newcommand{\PLS}{\Pi^\preceq\!\raisebox{1mm}{\ensuremath{\cdot}}}
\newcommand{\PLScharles}{\Pi^d}
\newcommand{\execname}{\mathsf{ex}}
\newcommand{\IndHyp}{\mathsf{IH}}
\newcommand{\exec}[2]{#2(#1)}

\newcommand{\ancestorssymbol}{\textsf{\upshape ancestors}}
\newcommand{\ancestors}{\ancestorssymbol}
\newcommand{\satpreas}[2]{\ensuremath{sat_precond_as(s, \as)}}
\newcommand{\proj}[2]{\ensuremath{#1{\downharpoonright}_{#2}}}
\newcommand{\dep}[3]{\ensuremath{#2 {\rightarrow} #3}}
\newcommand{\deptc}[3]{\ensuremath{#2 {\rightarrow^+} #3}}
\newcommand{\negdep}[3]{\ensuremath{#2 \not\rightarrow #3}}
\newcommand{\leavessymbol}{\textsf{\upshape leaves}}
\newcommand{\leaves}{\leavessymbol}

\newcommand{\childrensymbol}{\textsf{\upshape children}}
\newcommand{\children}[2]{\mathcal{\childrensymbol}_{#2}(#1)}
\newcommand{\succsymbol}{\textsf{\upshape succ}}
\newcommand{\succstates}[2]{\succsymbol(#1, #2)}
\newcommand{\concat}{\#}
\newcommand{\RG}{\cite{Rintanen:Gretton:2013}\ }
\newcommand{\KG}{Kovacs' grammar}
\newcommand{\cupdot}{\charfusion[\mathbin]{\cup}{\cdot}}
\newcommand{\cuparrow}{\charfusion[\mathbin]{\cup}{{\raisebox{.5ex} {\smathcalebox{.4}{\ensuremath{\leftarrow}}}}}}
\newcommand{\bigcuparrow}{\charfusion[\mathop]{\bigcup}{\leftarrow}}
\newcommand{\finiteunion}{\cuparrow}
\newcommand{\finitemap}{\ensuremath{\sqsubseteq}}
\newcommand{\dgraph}{dependency graph}
\newcommand{\domain}[1]{{\sc #1}}
\newcommand{\solver}[1]{{\sc #1}}
\providecommand{\problem}[1]{\domain{#1}}
\renewcommand{\v}{\ensuremath{\mathit{v}}}
\providecommand{\vs}[1]{\domain{#1}}
\renewcommand{\vs}{\ensuremath{\mathit{vs}}}
\newcommand{\VS}{\ensuremath{\mathit{VS}}}
\newcommand{\Aut}{\ensuremath{\mathit{Aut}}}
\newcommand{\Inst}[2]{\ensuremath{\mathit{#2 \rightarrow_{#1} #1}}}
\newcommand{\Image}{\ensuremath{\mathit{Im}}}
\newcommand{\Img}[2]{\protect{#1 \llparenthesis #2 \rrparenthesis}}
\newcommand{\SND}{\ensuremath{\mathit{\pi_2}}}
\newcommand{\FST}{\ensuremath{\mathit{\pi_1}}}
\newcommand{\tvsal}{{\pitchfork}}
\newcommand{\nauty}{CGIP}

\newcommand{\pwinter}{\ensuremath{\mathit{\bigcap_{pw}}}}

\newcommand{\dom}{\ensuremath{\mathit{\mathcal{D}}}}
\newcommand{\codom}{\ensuremath{\mathcal{R}}}

\newcommand{\map}{\ensuremath{\mathit{map}}}
\newcommand{\BIJEC}{\ensuremath{\mathit{bij}}}
\newcommand{\INJ}{\ensuremath{\mathit{inj}}}
\newcommand{\funion}{\ensuremath{\overset{\leftarrow}{\cup}}}

\newcommand{\ifnew}{\mbox{\upshape \textsf{if}}}
\newcommand{\thennew}{\mbox{\upshape \textsf{then}}}
\newcommand{\elsenew}{\mbox{\upshape \textsf{else}}}
\newcommand{\choice}{\ensuremath{\epsilon}}
\newcommand{\arbchoice}{\mbox{\upshape \textsf{arb}}}
\newcommand{\acycchoice}{\mbox{\upshape \textsf{ac}}}
\newcommand{\cycchoice}{\mbox{\upshape \textsf{cyc}}}
\newcommand{\filter}{\ensuremath{\mathit{FIL}}}
\newcommand{\probset}{\ensuremath{\boldsymbol \Pi}}
\newcommand{\probleq}{\ensuremath{\leq_\Pi}}
\newcommand{\CommVar}{\ensuremath{\bigcap_\v} }
\newcommand{\quotfun}{\ensuremath{ \mathcal{Q}}}

\newcommand{\apre}{\mbox{\upshape \textsf{pre}}}
\newcommand{\aeff}{\mbox{\upshape \textsf{eff}}}
\newcommand{\problist}{\ensuremath \probset}
\newcommand{\cat}{{\frown}}
\newcommand{\probproj}[2]{{#1}{\downharpoonright}^{#2}}
\newcommand{\preced}{\mathbin{\rotatebox[origin=c]{180}{\ensuremath{\rhd}}}}
\newcommand{\perm}{\ensuremath{\sigma}}
\newcommand{\invariant}[2]{\ensuremath{\mathit{inv({#1},{#2})}}}
\newcommand{\invstates}[1]{\ensuremath{\mathit{inv({#1})}}}
\newcommand{\probss}[1]{{\mathcal S}(#1)}
\newcommand{\parChildRel}[3]{\ensuremath{\negdep{#1}{#2}{#3}}}
\newcommand{\asessymbol}{\ensuremath{\mathbb{A}}}
\newcommand{\ases}[1]{{#1}^*}
\newcommand{\uniStates}{\ensuremath{\mathbb{U}}}
\newcommand{\recurrenceDiam}{\ensuremath{\mathit{rd}}}
\newcommand{\recurrenceAcycDiamfun}{\ensuremath{\mathit{{\mathfrak A}}}}
\newcommand{\recurrenceDiamfun}{\ensuremath{\mathit{\mathfrak R}}}
\newcommand{\traversalDiam}{\ensuremath{\mathit{td}}}
\newcommand{\traversalDiamfun}{\ensuremath{\mathit{\mathfrak T}}}
\newcommand{\isPrefix}[2]{\ensuremath{#1 \preceq #2}}
\providecommand{\path}{\ensuremath{\gamma}}
\newcommand{\aspath}{\ensuremath{\vec{\path}}}
\renewcommand{\path}{\ensuremath{\gamma}}
\newcommand{\n}{\textsf{\upshape n}}
\providecommand{\graph}{}
\providecommand{\cal}{}
\renewcommand{\cal}{}
\renewcommand{\graph}{{\cal G}}
\newcommand{\undirgraph}{{\cal G}}
\newcommand{\sset}{\ensuremath{\mbox{\upshape \textsf{ss}}}}
\renewcommand{\ss}{\ensuremath{\state s}}
\newcommand{\slist}{\ensuremath{\vec{\mbox{\upshape \textsf{ss}}}}}
\newcommand{\sll}{\ensuremath{\vec{\state}}}
\newcommand{\listset}{\mbox{\upshape \textsf{set}}}
\newcommand{\asset}{\ensuremath{\mathit{K}}}
\newcommand{\aslist}{\ensuremath{\mathit{\overset{\rightarrow}{\gamma}}}}
\newcommand{\head}{\mbox{\upshape \textsf{hd}}}
\renewcommand{\max}{\textsf{\upshape max}}
\newcommand{\argmax}{\textsf{\upshape argmax}}
\renewcommand{\min}{\textsf{\upshape min}}
\newcommand{\bool}{\mbox{\upshape \textsf{bool}}}
\newcommand{\last}{\mbox{\upshape \textsf{last}}}
\newcommand{\front}{\mbox{\upshape \textsf{front}}}
\newcommand{\rot}{\mbox{\upshape \textsf{rot}}}
\newcommand{\stuff}{\mbox{\upshape \textsf{intlv}}}
\newcommand{\tail}{\mbox{\upshape \textsf{tail}}}
\newcommand{\ngrtoas}{\ensuremath{\mathit{\as_{\graph_\mathbb{N}}}}}
\newcommand{\vsfun}{\mbox{\upshape \textsf{vs}}}
\newcommand{\inits}{\mbox{\upshape \textsf{init}}}
\newcommand{\satprecondas}{\mbox{\upshape \textsf{sat-pre}}}
\newcommand{\remcondlessact}{\mbox{\upshape \textsf{rem-cless}}}
\providecommand{\state}{}
\renewcommand{\state}{x}
\newcommand{\statea}{\ensuremath{x_1}}
\newcommand{\stateb}{\ensuremath{x_2}}
\newcommand{\statec}{\ensuremath{x_3}}
\newcommand{\fals}{\mbox{\upshape \textsf{F}}}
\newcommand{\indices}{\ensuremath{V}}
\newcommand{\edges}{\ensuremath{E}}
\newcommand{\vertices}{\ensuremath{V}}
\newcommand{\listtype}{\mbox{\upshape \textsf{list}}}
\newcommand{\settype}{\mbox{\upshape \textsf{set}}}
\newcommand{\acttype}{\mbox{\upshape \textsf{action}}}
\newcommand{\graphtype}{\mbox{\upshape \textsf{graph}}}
\newcommand{\projfun}[2]{\ensuremath{\Delta_{#1}^{#2}}}
\newcommand{\snapfun}[2]{\ensuremath{\Sigma_{#1}^{#2}}}
\newcommand{\RDfun}[1]{\ensuremath{{\mathcal R}_{#1}}}
\newcommand{\elldbound}[1]{\ensuremath{{\mathcal LS}_{#1}}}
\newcommand{\distinct}{\textsf{\upshape distinct}}
\newcommand{\ddistinct}{\mbox{\upshape \textsf{ddistinct}}}
\newcommand{\simple}{\mbox{\upshape \textsf{simple}}}

\newcommand{\reachable}[3]{\ensuremath{{#1}\rightsquigarrow{#3}}}

\newcommand{\Omit}[1]{}

\newcommand{\charles}[1]{\textcolor{red}{#1}}
\newcommand{\mohammad}[1]{Mohammad: \textcolor{green}{#1}}

\newcommand{\negreachable}[3]{\ensuremath{{#2}\not\rightsquigarrow{#3}}}
\newcommand{\wdiam}[2]{{#1}^{#2}}
\newcommand{\dsnapshot}[2]{\Delta_{#1}}
\newcommand{\ellsnapshot}[2]{{\mathcal L}_{#1}}

\newcommand{\snapshotsymbol}{|\kern-.7ex\raise.08ex\hbox{\scalebox{0.7}{$\bullet$}}}
\newcommand{\snapshot}[2]{\ensuremath{\mathrel{#1\snapshotsymbol_{#2}}}}
\newcommand{\vstype}{\texttt{\upshape VS}}
\newcommand{\vtype}{{\scriptsize \ensuremath{\dom(\delta)}}}
\newcommand{\Balgo}{{\mbox{\textsc{Hyb}}}}
\newcommand{\ssgraph}[1]{\graph_\ss}
\newcommand{\agree}{\textsf{\upshape agree}}
\newcommand{\ck}{\ensuremath{\texttt{ck}}}
\newcommand{\lk}{\ensuremath{\texttt{lk}}}
\newcommand{\gr}{\ensuremath{\texttt{gr}}}
\newcommand{\gk}{\ensuremath{\texttt{gk}}}
\newcommand{\CK}{\ensuremath{\texttt{CK}}}
\newcommand{\LK}{\ensuremath{\texttt{LK}}}
\newcommand{\GR}{\ensuremath{\texttt{GR}}}
\newcommand{\GK}{\ensuremath{\texttt{GK}}}
\newcommand{\safe}{\ensuremath{\texttt{s}}}

\newcommand{\derivname}{\ensuremath{\partial}}
\newcommand{\deriv}[3]{\ensuremath{\derivname(#1,#2,#3)}}
\newcommand{\derivabbrev}[3]{\ensuremath{{\partial(#1,#2)}}}
\newcommand{\subsetoracle}{\ensuremath{ \Omega}}
\newcommand{\Aalgo}{{\mbox{\textsc{Pur}}}}
\newcommand{\Sname}{\textsf{\upshape S}}
\newcommand{\Sbrace}[1]{\Sname\langle#1\rangle}
\newcommand{\SalgoName}{\Sname_{\textsf{\upshape max}}}
\newcommand{\Salgo}[1]{\SalgoName\langle#1\rangle}

\newcommand{\WLPname}{{\mbox{\textsc{wlp}}}}
\newcommand{\WLPbrace}[1]{\WLPname\langle#1\rangle}
\newcommand{\WLPalgoName}{\WLPname_{\textsf{\upshape max}}}
\newcommand{\WLP}[1]{\WLPalgoName\langle#1\rangle}

\newcommand{\Nname}{\ensuremath{\textsf{\upshape N}}}
\newcommand{\Nbrace}[1]{\Nname\langle#1\rangle}
\newcommand{\NalgoName}{\Nname{_{\textsf{\upshape sum}}}}
\newcommand{\Nalgobrace}[1]{\NalgoName\langle#1\rangle}

\newcommand{\acycNname}{\widehat{\textsf{\upshape N}}}
\newcommand{\acycNbrace}[1]{\acycNname\langle#1\rangle}
\newcommand{\acycNalgoName}{\acycNname{_{\textsf{\upshape sum}}}}
\newcommand{\acycNalgobrace}[1]{\acycNalgoName\langle#1\rangle}

\newcommand{\Mname}{\ensuremath{\textsf{\upshape M}}}
\newcommand{\Mbrace}[1]{\Mname\langle#1\rangle}
\newcommand{\MalgoName}{\Mname{_{\textsf{\upshape sum}}}}
\newcommand{\Malgobrace}[1]{\MalgoName\langle#1\rangle}
\newcommand{\cardinality}[1]{{\ensuremath{|#1|}}}
\newcommand{\length}[1]{\cardinality{#1}}
\newcommand{\basecasefun}{\ensuremath{b}}
\newcommand{\Basecasefun}{\ensuremath{\mathcal B}}

\newcommand{\vertexgen}{\ensuremath{u}}
\newcommand{\vertexa}{{\ensuremath{\vertexgen_1}}}
\newcommand{\vertexb}{{\ensuremath{\vertexgen_2}}}
\newcommand{\vertexc}{{\ensuremath{\vertexgen_3}}}
\newcommand{\vertexd}{{\ensuremath{\vertexgen_4}}}
\newcommand{\vertexsetgen}{\ensuremath{\mathit{us}}}
\newcommand{\vertexseta}{\vertexsetgen_1}
\newcommand{\vertexsetb}{\vertexsetgen_2}
\newcommand{\labelsymbol}{\ensuremath{l}}
\newcommand{\labelfun}{\ensuremath{\mathcal{L}}}
\newcommand{\DAG}{\ensuremath{A}}
\newcommand{\NalgoNameN}{{\ensuremath{\NalgoName_{\mathbb{N}}}}}
\newcommand{\NnameN}{\ensuremath{\Nname_\mathbb{N}}}
\newcommand{\replaceprojsinglename}{\raisebox{-0.3mm} {\scalebox{0.7}{\textpmhg{H}}}}
\newcommand{\replaceprojsingle}[3] {{ #2} \underset {#1} {\raisebox{-0.3mm} {\scalebox{0.7}{\textpmhg{H}}}}  #3}
\newcommand{\HOLreplaceprojsingle}[1]{\underset {#1} {\raisebox{-0.3mm} {\scalebox{0.7}{\textpmhg{H}}}}}

\newcommand{\lotus}{{\scalebox{0.6}{\includegraphics{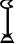}}}}
\newcommand{\invlotus}{\mathbin{\rotatebox[origin=c]{180}{$\lotus$}}}
\newcommand{\clique}{\ensuremath{K}}
\newcommand{\partition}{\ensuremath{\vs_{1..n}}}
\newcommand{\partitiontype}{\ensuremath{\vstype_{1..n}}}
\newcommand{\vtxpartition}{\ensuremath{P}}

\newcommand{\traversalDiamAlgo}{{\mbox{\textsc{TravDiam}}}}
\newcommand{\prefix}{\textsf{\upshape pfx}}
\newcommand{\powerset}{\mathbb{P}}
\newcommand{\postfix}{\textsf{\upshape sfx}}
\newcommand{\dfunproj}{\ensuremath{{\mathfrak D}}}
\newcommand{\dfunsnap}{\ensuremath{{\textgoth D}}}
\newcommand{\ellfunproj}{\ensuremath{\mathfrak L}}
\newcommand{\ellfunsnap}{\ensuremath{\textgoth L}}
\newcommand{\cycle}{\ensuremath{C}}
\newcommand{\petal}{\ensuremath{\eta}}
\renewcommand{\prod}{\ensuremath{{{{{\mathlarger{\mathlarger {{\mathlarger {\Pi}}}}}}}}}}
\newcommand{\sccset}{{\ensuremath{SCC}}}
\newcommand{\scc}{{\ensuremath{scc}}}
\newcommand{\negate}[1]{\overline{#1}}
\newcommand{\setofsets}{\ensuremath{S}}
\newcommand{\group}{\ensuremath{\cal \Gamma}}
\newcommand{\neededvars}{{\cal N}}
\newcommand{\sspace}{\mbox{\upshape \textsf{sspc}}}
\newcommand{\tip}{\ensuremath{t}}
\newcommand{\vara}{\ensuremath{\v_1}}
\newcommand{\varb}{\ensuremath{\v_2}}
\newcommand{\varc}{\ensuremath{\v_3}}
\newcommand{\vard}{\ensuremath{\v_4}}
\newcommand{\vare}{\ensuremath{\v_5}}
\newcommand{\varf}{\ensuremath{\v_6}}
\newcommand{\varg}{\ensuremath{\v_7}}
\newcommand{\varh}{\ensuremath{\v_8}}
\newcommand{\vari}{\ensuremath{\v_9}}
\newcommand{\acta}{\ensuremath{\act_1}}
\newcommand{\actb}{\ensuremath{\act_2}}
\newcommand{\actc}{\ensuremath{\act_3}}
\newcommand{\actd}{\ensuremath{\act_4}}
\newcommand{\acte}{\ensuremath{\act_5}}
\newcommand{\actf}{\ensuremath{\act_6}}
\newcommand{\actg}{\ensuremath{\act_7}}
\newcommand{\acth}{\ensuremath{\act_8}}
\newcommand{\acti}{\ensuremath{\act_9}}

\newcommand{\planningproblem}{\Uppi}

\tikzset{dots/.style args={#1per #2}{line cap=round,dash pattern=on 0 off #2/#1}}
\providecommand{\moham}[1]{\fbox{{\bf \@Mohammad: }#1}}
\newcommand{\TDbound}{{\mbox{\textsc{Arb}}}}
\newcommand{\expbound}{{\mbox{\textsc{Exp}}}}
\newcommand{\sasdom}{\expbound}
\newcommand{\cardfun}{\ensuremath{\mathbb{C}}}
\newcommand{\AGNa}{AGN1}
\newcommand{\AGNb}{AGN2}
\newcommand{\reset}{{\ensuremath{reset}}}
\newcommand{\cost}{{\ensuremath{\mathcal{C}}}}
\newcommand{\goal}{{\ensuremath{\mathcal{G}}}}
\newcommand{\init}{{\ensuremath{\mathcal{I}}}}
\newcommand{\completenessthreshold}{{\ensuremath{\mathcal{CT}}}}
\newcommand{\subsetDiam}{\mathscr{S}}
 \renewcommand{\childrensymbol}{\textsf{\upshape child}}
\renewcommand{\traversalDiamAlgo}{{\mbox{\textsc{TravD}}}}

\section{Introduction}
\label{sec:intro}

Compilation to propositional satisfiability (SAT), or other constraint formalisms, has been a successful approach to solving different variants of planning and model checking~\cite{kautz:selman:92,BiereCCZ99}.
The majority of such compilation based techniques work by submitting multiple queries to a constraints solver, e.g.\ a SAT solver, and each of those queries encode the question `Does there exist a witness transition sequence with at most $h$ steps?', where $h$ is some natural number, usually called the horizon.
This is repeated for multiple increasing values of $h$.
In order for these methods to be complete, there must be an upper bound on $h$, usually called the \emph{completeness threshold}, beyond which no witness would be found if none that is shorter exists.
Also, the tighter the bounds, the more efficient these compilation based procedures are.

Previous work has identified different topological properties of the state space to be completeness thresholds for different variants of model checking and planning problems.
E.g.\ for bounded model checking of safety properties, \citeauthor{BiereCCZ99} identified the \emph{diameter}, which is the length of the longest shortest path in the state space, as a completeness threshold.
The diameter is also a completeness threshold for SAT-based satisficing planning.
Biere et al.\ also identified the \emph{recurrence diameter}, which is the length of the longest simple path in the state space, as a completeness threshold for bounded model checking of liveness properties.
Identifying and computing completeness thresholds was considered an active research area in model checking by Edmund Clarke~\cite{clarke2009turing} in his Turing lecture and, indeed, authors have identified completeness thresholds for many involved kinds of model checking problems~\cite{kroening2011linear,bundala2012magnitude}.

Optimal planning is a variant of planning where the solution has to be optimal, according to some measure of optimality.
There has been multiple compilations of various types of optimal planning to SAT, satisfiability modulo theories (SMT), and maximum satisfiability formalisms~\cite{DBLP:conf/aips/ButtnerR05,DBLP:conf/aaai/GiunchigliaM07,robinson2010partial,DBLP:journals/jair/MuiseBM16,DBLP:conf/ijcai/LeofanteGAT20}.
Many of the existing compilations tackle optimality criteria of different variants of the length of the plan.
Nonetheless, a particularly interesting optimality criterion is plan cost, which has been the primary optimality criterion in planning competitions since 2008~\cite{DBLP:journals/ai/GereviniHLSD09}.
A gap in the literature seems to be a practical completeness threshold for cost optimal planning problems that have actions with 0-cost.
This is one hurdle to the application of SAT-based planning to such problems, since without a reasonable completeness threshold, optimality can only be proved after solving the compilation for a horizon that is the number of states in the state space.
This is impractical for most problems since it can be exponentially bigger than the size of the given problem. 
It should be noted that some approaches try to circumvent the need for a tight completeness threshold, such the ones by \citeauthor{robinson2010partial} and \citeauthor{DBLP:conf/ijcai/LeofanteGAT20}, which add an over-approximation of the transition relation underlying the planning problem to the encoding.
Optimality of a given solution is then proved when this over-approximation is unsatisfiable.
Nonetheless, these approaches still need to compute compilations for multiple horizons and they are suscepteble to having to solve compilations for the same exponential horizon, since the over-approximation is generally incomplete, i.e.\ it could be solvable even if the concrete system is not solvable.

In this work we try to address that gap, and study the suitability of different state space topological properties for being completeness thresholds for cost optimal planning with actions with 0-cost.
We identify a completeness threshold that can be practically bounded, and show that no tighter completeness threshold can be computed for a given problem without exploiting cost information, the initial state, or the goal.
To test the practical utility of this completeness threshold, we devise a SAT compilation for cost optimal planning, and use that in an any-time planning as satisfiability algorithm, where the horizon is fixed from the beginning to the completeness threshold.
This algorithm starts with an upper bound on the total cost and improves that cost upper bound every iteration.
Experiments show that the algorithm is able to compute plans with costs better than the initial costs, and in many cases it can compute plans whose cost matches the optimal cost.
Furthermore, the algorithm is able to prove the optimality of certain costs for a number of instances, some of which could not be proven optimal by the widely used LM-cut planning heuristic.
 
\section{Background and Notation}
\label{sec:defs}

Let $v\mapsto b$ denote a maplet.
A mapping $f$ is a finite set of maplets s.t.\ if $v\mapsto a_1\in f$ and $v\mapsto a_2\in f$, we have that $a_1 = a_2$.
We write $\dom(f)$ to denote $\{\v \mid (\v \mapsto a) \in f\}$, the domain of $f$.
We define $f(v)$ to be $a$ if $v\mapsto a\in f$, and otherwise it is undefined. 
The composition of two mappings $f$ and $g$, denoted as $f \circ g$, is defined to be $f\circ g = f(g(x))$.
In the rest of this paper, we use $\cardinality{\bullet}$ to denote the cardinality of a set or the length of a list.

We consider planning problems where actions have costs.
Such problems are specified in terms of a factored transition system, which is a set of actions, an initial state, a goal, and a cost mapping that assigns costs to actions.

\begin{mydef}[States and Actions]
\label{def:stateAction}
A state, $\state$, is a mapping from state-characterising propositions to Booleans, i.e. $\bot$ or $\top$.
For states $\state_1$ and $\state_2$, the union, $\state_1 \uplus \state_2$, is defined as $\{\v \mapsto b \mid $ $\v \in \dom(\state_1) \cup \dom(\state_2) \wedge \ifnew\; \v \in \dom(\state_1) \;\thennew\; b = \state_1(\v) \;\elsenew\; b = \state_2(\v)\}$.
Note that the state $\state_1$ takes precedence.
An action is a pair of states, $(p,e)$, where $p$ represents the \emph{preconditions} and $e$ represents the \emph{effects}.
For action $\act=(p,e)$, the domain of that action is $\dom(\act)\equiv\dom(p) \cup \dom(e)$.
\end{mydef}
 
\begin{mydef}[Execution]
\label{def:exec}
When an action \act\ $(=(p,e))$ is executed at state $\state$, it produces a successor state $\exec{\state}{\act}$, formally defined as
$\exec{\state}{\act} = \ifnew\; p \nsubseteq \state\; \thennew\; \state \;\elsenew\; e \uplus \state$.
We lift execution to lists of actions $\as$, so $\exec{\state}{\as}$ denotes the state resulting from successively applying each action from $\as$ in turn, starting at $\state$.
\end{mydef} We give examples of states and actions using sets of literals, where we denote the maplet $a\mapsto \top$ with the literal $a$ and $a\mapsto \bot$ with the literal $\overline{a}$.
For example, $(\{a,\overline{b}\}, \{c\})$ is an action that if executed in a state where $a$ is true and $b$ is false, it sets $c$ to true.
$\dom((\{a,\overline{b}\}, \{c\})) = \{a,b,c\}$.
We also give examples of sequences, which we denote by the square brackets, e.g. $[a,b,c]$.

\begin{mydef}[Factored Transition System]
\label{def:actoredTransitionSystem}
A set of actions $\delta$ constitutes a factored transition system.
$\dom(\delta)$ denotes the domain of $\delta$, which is the union of the domains of all the actions in $\delta$.
Let $\listset(\as)$ be the set of elements in $\as$.
The set of valid action sequences, $\ases{\delta}$, is $\{\as \mid\; \listset(\as) \subseteq \delta\}$.
The set of valid states, $\uniStates(\delta)$,  is $\{\state \mid \dom(\state) = \dom (\delta)\}$.
\end{mydef}
 {\makeatletter
\def\old@comma{,}
\catcode`\,=13
\def,{\ifmmode\old@comma\discretionary{}{}{}\else\old@comma\fi}
\makeatother
\makeatletter
\def\old@dot{.}
\catcode`\.=13
\def.{\ifmmode\old@dot\discretionary{}{}{}\else\old@dot\fi}
\makeatother
 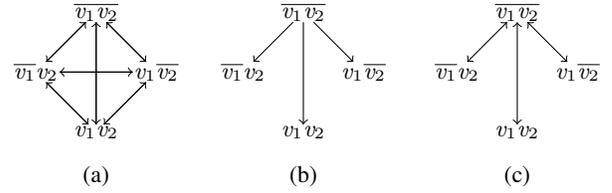
\begin{figure}[t]
\begin{subfigure}[b]{0.15\textwidth}
\centering
\begin{tikzpicture}[scale=0.8]
\node (s0) at (4,1) [varnodenoperi] {\footnotesize $\overline{\vara\varb}$ } ;
\node (s2) at (3,0)   [varnodenoperi] {\footnotesize $\overline{\vara}\varb$ } ;
\node (s4) at (5,0)   [varnodenoperi] {\footnotesize $\vara\overline{\varb}$ } ;
\node (s6) at (4,-1)[varnodenoperi] {\footnotesize $\vara\varb$ } ;
\draw [->] (s0) -- (s2) ;
\draw [->] (s2) -- (s0) ;
\draw [->] (s0) -- (s4) ;
\draw [->] (s4) -- (s0) ;
\draw [->] (s0) -- (s6) ;
\draw [->] (s6) -- (s0) ;
\draw [->] (s6) -- (s2) ;
\draw [->] (s2) -- (s6) ;
\draw [->] (s4) -- (s2) ;
\draw [->] (s2) -- (s4) ;
\draw [->] (s4) -- (s6) ;
\draw [->] (s6) -- (s4) ;
\end{tikzpicture}
\caption[fig:stateDiag]{\label{fig:stateDiag}}
\end{subfigure}
\begin{subfigure}[b]{0.15\textwidth}
\centering
\begin{tikzpicture}[scale=0.8]
\node (s0) at (4,1) [varnodenoperi] {\footnotesize $\overline{\vara\varb}$ } ;
\node (s1) at (3,0)   [varnodenoperi] {\footnotesize $\overline{\vara}\varb$ } ;
\node (s2) at (5,0)   [varnodenoperi] {\footnotesize $\vara\overline{\varb}$ } ;
\node (s3) at (4,-1)[varnodenoperi] {\footnotesize $\vara\varb$ } ;
\draw [->] (s0) -- (s1) ;
\draw [->] (s0) -- (s2) ;
\draw [->] (s0) -- (s3) ;
\end{tikzpicture}
\caption[fig:stateDiag2]{\label{fig:stateDiag2}}
\end{subfigure}
\begin{subfigure}[b]{0.16\textwidth}
\centering
\begin{tikzpicture}[scale=0.8]
\node (s0) at (4,1) [varnodenoperi] {\footnotesize $\overline{\vara\varb}$ } ;
\node (s1) at (3,0)   [varnodenoperi] {\footnotesize $\overline{\vara}\varb$ } ;
\node (s2) at (5,0)   [varnodenoperi] {\footnotesize $\vara\overline{\varb}$ } ;
\node (s3) at (4,-1)[varnodenoperi] {\footnotesize $\vara\varb$ } ;
\draw [<->] (s0) -- (s1) ;
\draw [<->] (s0) -- (s2) ;
\draw [<->] (s0) -- (s3) ;
\end{tikzpicture}
\caption[fig:stateDiag2]{\label{fig:stateDiag3}}
\end{subfigure}
\caption{The state spaces of the systems from the different examples.
}
\end{figure}

\begin{myeg}
\label{eg:factored}
Consider the factored system $\delta \equiv \{\acta =(\emptyset, \{\vara, \varb\}),
        \actb =(\emptyset, \{\overline{\vara}, \varb\}),
        \actc =(\emptyset, \{\vara, \overline{\varb}\}),
        \actd =(\emptyset, \{\overline{\vara}, \overline{\varb}\})\}$.
Figure~\ref{fig:stateDiag} shows the state space of $\delta_1$, where different states defined on the variables $\dom(\delta_1) = \{\vara,\varb\}$ are shown.
Since every state can be reached via one action from every other state, the state space is a clique.
\end{myeg}
 }

\begin{mydef}[Planning Problem]
\label{def:planningproblem}
A planning problem $\planningproblem$ is a tuple $(\delta,\cost,I,G)$, where $\delta$ is a factored transition system, $I$ is a state s.t.\ $I\in\uniStates(\delta)$, $\goal$ is a state s.t.\ $\goal\subseteq\dom(\delta)$, and $\cost$ is a mapping from $\delta$ to $\mathbb{N}$.
We refer to the different components of $\planningproblem$ as $\planningproblem.\delta$, $\planningproblem.\cost$, $\planningproblem.\init$, and $\planningproblem.\goal$, but when it is unambiguous we only use $\delta$, $\cost$, $\init$, and $\goal$.
A solution to $\planningproblem$ is an action sequence $\as\in\ases{\delta}$ s.t.\ $\goal\subseteq\exec{\init}{\as}$.
We define the function $\ases\cost$ from $\ases\delta$ to $\mathbb{N}$ s.t.\ $\ases\cost([]) = 0$, and $\ases\cost([\act_1,\act_2,\ldots]) = \cost(\act_1) + \ases\cost([\act_2,\dots])$.
An optimal solution to $\planningproblem$ is a solution $\as$ s.t.\ $\ases\cost(\as)\leq\ases\cost(\as')$, for any other solution $\as'$ of $\planningproblem$.
For any mapping $f$ from $\mathbb N$ to $\mathbb N$, we denote by $f(\planningproblem)$ the planning problem $(\delta,f\circ\cost,I,G)$
\end{mydef}
\begin{myeg}
\label{eg:planningproblem}
Let $\cost=\{\act_i\mapsto 1 \mid 1\leq i \leq 4\}$ be a cost mapping.
A planning problem $\planningproblem$ is $(\delta_1,\cost,\{\overline\vara,\overline\varb\},\{\vara,\varb\})$.
A solution for $\planningproblem$ is $[\actb,\acta]$, where $\ases\cost([\actb,\acta])=2$.
An optimal solution for $\planningproblem$ is $[\acta]$, where $\ases\cost([\acta])=1$.
\end{myeg}

\begin{mydef}[Completeness Threshold]
A natural number $\completenessthreshold$ is a completeness threshold for planning problem $\planningproblem$ iff for any solution $\as$ of $\planningproblem$ there is a solution $\as'$ s.t.\ $\cardinality{\as'}\leq \completenessthreshold$ and $\ases\cost(\as')\leq\ases\cost(\as)$.
\end{mydef}

An evident use for a completeness threshold is in methods for finding cost optimal plans based on compilation to constraints, where one would at most need to unfold the transition relation in the compilation as many times as a completeness threshold for the given problem.

There are multiple possible values that could act as completeness thresholds for a planning problem.
The following propositions characterise three such thresholds.

\begin{myprop}
\label{prop:expCT}
For any planning problem $\planningproblem$, $2^\cardinality{\dom(\delta)} - 1$ is a completeness threshold for $\planningproblem$.
\end{myprop}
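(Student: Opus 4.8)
The plan is to argue by a pigeonhole-plus-cycle-elimination argument on the sequence of states traversed by a solution, using crucially that all action costs are nonnegative so that deleting a cyclic segment can never increase the total cost.

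First I would fix an arbitrary solution $\as$ of $\planningproblem$ and track the states visited while executing it from the initial state $I$, namely $\state_0 = I, \state_1, \ldots, \state_{\cardinality{\as}}$, where $\state_k$ is the state reached after the first $k$ actions of $\as$. The first thing I need is that every $\state_k$ lies in $\uniStates(\delta)$: since $I \in \uniStates(\delta)$ and executing an action $(p,e)$ (Definition~\ref{def:exec}) either leaves the current state unchanged or replaces it with $e \uplus \state_k$, whose domain is $\dom(e) \cup \dom(\state_k) = \dom(\delta)$, the domain $\dom(\delta)$ is preserved at every step. Because a valid state assigns $\top$ or $\bot$ to each of the $\cardinality{\dom(\delta)}$ variables, $\uniStates(\delta)$ contains exactly $2^{\cardinality{\dom(\delta)}}$ states, so any execution trace with strictly more entries than this must revisit a state.

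Next I would perform the cycle-elimination step. If $\cardinality{\as} > 2^{\cardinality{\dom(\delta)}} - 1$, the trace $\state_0, \ldots, \state_{\cardinality{\as}}$ has at least $2^{\cardinality{\dom(\delta)}} + 1$ entries, so by pigeonhole there are indices $i < j$ with $\state_i = \state_j$. Writing $\as = \as_1 \cat \as_2 \cat \as_3$, where $\as_1$ is the prefix of length $i$, $\as_2$ the segment from position $i{+}1$ to $j$, and $\as_3$ the remaining suffix, the key lemma I invoke is the compositionality of execution, $\exec{\state}{(\as_1 \cat \as_2)} = \exec{(\exec{\state}{\as_1})}{\as_2}$, which follows directly from Definition~\ref{def:exec}. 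Since $\exec{I}{\as_1} = \state_i = \state_j = \exec{I}{(\as_1 \cat \as_2)}$, the shortened sequence $\as' = \as_1 \cat \as_3$ reaches the same final state, $\exec{I}{\as'} = \exec{I}{\as}$, and hence still satisfies $\goal \subseteq \exec{I}{\as'}$, so $\as'$ is again a solution. Because $\ases\cost$ is additive over $\cat$ (Definition~\ref{def:planningproblem}) and each action cost is a natural number, $\ases\cost(\as') = \ases\cost(\as) - \ases\cost(\as_2) \leq \ases\cost(\as)$.

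Finally I would iterate this shortening: each application strictly decreases the length while never increasing the cost, so after finitely many steps I obtain a solution whose execution trace repeats no state, which therefore has length at most $2^{\cardinality{\dom(\delta)}} - 1$ and cost no larger than that of $\as$. This exhibits, for every solution, a witness meeting the completeness-threshold requirement with $\completenessthreshold = 2^{\cardinality{\dom(\delta)}} - 1$. I expect the only real obstacle to be bookkeeping rather than conceptual: stating the compositionality of $\exec{\cdot}{\cdot}$ and the additivity of $\ases\cost$ over $\cat$ precisely enough to justify splicing out $\as_2$, and making explicit that the cost inequality uses only nonnegativity of costs, so that the argument is completely insensitive to the presence of $0$-cost actions, which is the case of interest in this paper.
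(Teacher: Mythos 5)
Your proposal is correct, and it is essentially the paper's own approach: the paper states Proposition~\ref{prop:expCT} without proof, but your cycle-splicing induction (split $\as=\as_1\cat\as_2\cat\as_3$ at a repeated state, delete $\as_2$, use nonnegativity of costs to get $\ases\cost(\as')\leq\ases\cost(\as)$, iterate) is exactly the argument the paper gives for Theorem~\ref{thm:rdCT}, with your pigeonhole count $\cardinality{\uniStates(\delta)}=2^{\cardinality{\dom(\delta)}}$ playing the role of Proposition~\ref{prop:remcycle} since a repetition-free trace has length at most $2^{\cardinality{\dom(\delta)}}-1$. All the supporting steps you flag (domain preservation along the trace, compositionality of execution over $\cat$, additivity of $\ases\cost$) are sound and suffice.
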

\begin{myprop}
\label{prop:allunit}
For any planning problem $\planningproblem$, if $\as$ is a solution for the problem and if $\cost(\act) = 1$ for every $\act\in\delta$, then $\cardinality{\as}$ is a completeness threshold for $\planningproblem$.
\end{myprop}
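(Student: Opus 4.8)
The plan is to exploit the fact that under unit action costs the cost of an action sequence is exactly its length, so the solution $\as$ fixed in the statement can serve as a universal fallback. First I would establish, by a routine induction on the length of an action sequence using the recursive definition of $\ases\cost$, that whenever $\cost(\act)=1$ for every $\act\in\delta$ we have $\ases\cost(\asb)=\cardinality{\asb}$ for every $\asb\in\ases\delta$: the base case is $\ases\cost([])=0=\cardinality{[]}$, and in the step $\ases\cost([\act_1,\ldots])=\cost(\act_1)+\ases\cost([\ldots])$ adds $\cost(\act_1)=1$, matching the increment of the length by one.

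With that identity in hand, to show that $\cardinality{\as}$ satisfies the definition of a completeness threshold I would take an arbitrary solution $\asb$ of $\planningproblem$ and exhibit the required dominating solution by a two-way case split comparing $\cardinality{\asb}$ with $\cardinality{\as}$. If $\cardinality{\asb}\leq\cardinality{\as}$, then $\asb$ itself works: it is a solution of length at most $\cardinality{\as}$, and trivially $\ases\cost(\asb)\leq\ases\cost(\asb)$. If instead $\cardinality{\asb}>\cardinality{\as}$, I would take the hypothesised solution $\as$ as the witness: it is a solution by assumption, its length is $\cardinality{\as}\leq\cardinality{\as}$, and by the identity above $\ases\cost(\as)=\cardinality{\as}<\cardinality{\asb}=\ases\cost(\asb)$, so it dominates $\asb$ in cost. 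In both cases the produced solution has length at most $\cardinality{\as}$ and cost at most $\ases\cost(\asb)$, which is precisely what the completeness-threshold definition requires.

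There is no genuine obstacle here; the only thing to watch is the variable clash between the solution $\as$ fixed in the statement (which defines the threshold $\cardinality{\as}$) and the universally quantified solution in the definition of completeness threshold, which is why I rename the latter to $\asb$. The single real idea is the cost-equals-length identity, after which the result is an immediate case distinction, so I would keep the write-up short and not belabour the induction.
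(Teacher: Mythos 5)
Your proof is correct: the cost-equals-length identity under unit costs, followed by taking whichever of the arbitrary solution $\asb$ and the fixed solution $\as$ is shorter, establishes exactly the property required by the definition of a completeness threshold. The paper states this proposition without proof, treating it as evident, and your argument is precisely the reasoning it leaves implicit, so there is nothing to compare beyond noting that your write-up (including the careful renaming to avoid the variable clash) fills that gap faithfully.
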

\begin{myprop}
\label{prop:nozero}
For any planning problem $\planningproblem$, if $\as$ is a solution for the problem and if $\cost(\act) \neq 0$ for every $\act\in\delta$, then $\lfloor\ases\cost(\as)/c_\min\rfloor$ is a completeness threshold for $\planningproblem$, where $c_\min$ denotes $\underset{\act\in\delta}{\min}\;\;\cost(\act)$.
\end{myprop}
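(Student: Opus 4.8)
The plan is to reduce the whole statement to a single elementary inequality: because every action has nonzero cost, every action costs at least $c_\min\ge 1$, so the \emph{length} of any action sequence is controlled by its \emph{cost}. Concretely, I would first establish the auxiliary claim that for every $\vec\rho\in\ases\delta$ we have $\cardinality{\vec\rho}\le\lfloor\ases\cost(\vec\rho)/c_\min\rfloor$. This follows by a one-line induction on $\vec\rho$ using the recursive definition of $\ases\cost$: each summand $\cost(\act)$ contributing to $\ases\cost(\vec\rho)$ is at least $c_\min$, so $\ases\cost(\vec\rho)\ge\cardinality{\vec\rho}\cdot c_\min$; dividing by $c_\min>0$ and using that $\cardinality{\vec\rho}$ is a natural number (hence equals its own floor, and the floor is monotone) yields the bound. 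The hypothesis $\cost(\act)\neq 0$ is exactly what guarantees $c_\min\ge 1>0$, making this division legitimate.

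With this claim in hand, I unfold the definition of completeness threshold. Let $\as$ be the solution fixed in the statement and write $\completenessthreshold=\lfloor\ases\cost(\as)/c_\min\rfloor$. I must show that for an \emph{arbitrary} solution $\vec\sigma$ there is a solution whose length is at most $\completenessthreshold$ and whose cost does not exceed $\ases\cost(\vec\sigma)$. The key observation is that the fixed $\as$ can itself serve as a fallback witness whenever $\vec\sigma$ is expensive. So I would split on the comparison of $\ases\cost(\vec\sigma)$ with $\ases\cost(\as)$ and take the witness $\vec\tau:=\vec\sigma$ when $\ases\cost(\vec\sigma)\le\ases\cost(\as)$, and $\vec\tau:=\as$ otherwise. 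In either case one checks immediately that $\ases\cost(\vec\tau)\le\ases\cost(\vec\sigma)$ and $\ases\cost(\vec\tau)\le\ases\cost(\as)$.

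It then remains only to bound the length of $\vec\tau$. Applying the auxiliary claim to $\vec\tau$ and then monotonicity of the floor, $\cardinality{\vec\tau}\le\lfloor\ases\cost(\vec\tau)/c_\min\rfloor\le\lfloor\ases\cost(\as)/c_\min\rfloor=\completenessthreshold$, which is precisely what the definition demands.

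I expect the only real subtlety, rather than a genuine obstacle, to be bookkeeping around the quantifier structure of the completeness-threshold definition: the bound $\completenessthreshold$ is phrased in terms of the single fixed solution $\as$, whereas the definition ranges over all solutions, and it is the case split above that bridges this gap. Two degenerate situations deserve a passing remark: if $\delta=\emptyset$ then $c_\min$ is undefined and the statement is vacuous, so I assume $\delta\neq\emptyset$; and if $\as=[]$ then $\completenessthreshold=0$, in which case the auxiliary claim still forces any cost-$0$ solution to be empty, so the argument goes through unchanged. Notably, unlike the exponential bound of Proposition~\ref{prop:expCT}, this argument needs no cycle-elimination or simple-path reasoning about the state space at all; positivity of costs alone does the work.
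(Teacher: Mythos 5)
Your proof is correct, and it matches the argument the paper evidently intends: the paper states Proposition~\ref{prop:nozero} without proof, and your two ingredients---the length-by-cost bound $\cardinality{\vec\rho}\cdot c_\min\leq\ases\cost(\vec\rho)$ made legitimate by $c_\min\geq 1$, plus the case split using the fixed solution $\as$ as a fallback witness to bridge the universal quantifier in the completeness-threshold definition---are exactly the reasoning it presupposes. Your closing observation is also apt: unlike Theorem~\ref{thm:rdCT}, no cycle-elimination or state-space argument is needed here, since positivity of costs alone bounds the length of any candidate witness.
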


The three above completeness thresholds are either too loose to be of any practical use, or do not hold for planning problems in general.
In the next section we study tighter completeness thresholds for planning problems that can be used with more general planning problems.
 \section{Different Completeness Thresholds}
\label{sec:cts}

As stated earlier, topological properties of the state space have been employed as completeness thresholds for planning and model checking.
In this section we study the suitability of different topological properties as completeness thresholds for planning.

\subsection{The diameter}
One such topological property is the diameter, suggested by \citeauthor{BiereCCZ99}~\citeyear{BiereCCZ99}, which is the length of the longest shortest path between any two states in the state space of a system.
\begin{mydef}[Diameter]
\label{def:diam}
The diameter, written $d(\delta)$, is the length of the longest shortest action sequence, formally
\[d(\delta) = \underset{\state \in \uniStates(\delta), \as \in \ases{\delta}}{\max} \;\;\; \underset{\exec{\state}{\as} = \exec{\state}{\as'}, \as' \in \ases{\delta} }{\min}\; \cardinality{\as'}\]
\end{mydef}
 \begin{myeg}
\label{eg:diam}
For the transition system $\delta$ from Example~\ref{eg:factored}, the diameter is 1 because any state can be reached from any other state with one action.
\end{myeg}
Note that if there is a valid action sequence between any two valid states of $\delta$, then there is a valid action sequence between them, which is not longer than $d(\delta)$.
Thus it is a completeness threshold for bounded model-checking and SAT-based planning.
There are many features of the diameter that would make it a practically viable completeness threshold.
First, it is the tightest topological property of state spaces that has been studied.
Secondly, although the worst-case complexity of computing the diameter for a succinct graph is $\Pi_2^P$-hard~\cite{hemaspaandra2010complexity}, there are practical methods that can compositionally compute upper bounds on the diameter~\cite{baumgartner2002property,Rintanen:Gretton:2013,abdulaziz2015verified,icaps2017}.
Unfortunately, the diameter is not a completeness threshold for cost optimal planning, as shown in the following example.
\begin{myeg}
\label{eg:dnotCT}
Consider the factored system $\delta\equiv\{\acta\equiv(\{\overline{\vara,\varb}\},\{\vara,\overline{\varb}\}), \actb\equiv(\{\vara,\overline{\varb}\},\{\overline{\vara},\varb\}), \actc\equiv(\{\overline{\vara,\varb}\},\{\overline{\vara},\varb\})\}$. Consider the cost mapping $\cost\equiv\{\acta\mapsto 1, \actb\mapsto 1, \actc\mapsto 3\}$ where the transitions are labelled with the costs of the corresponding actions.
Consider a planning problem $\planningproblem\equiv(\delta,\cost,\{\overline{\vara,\varb}\},\{\overline{\vara},\varb\})$.
The diameter of that system is 1, but there is a plan of length 2 whose cost is less than any plan whose length is bounded by the diameter.
\end{myeg}

\subsection{The recurrence diameter}

Another topological property that has been used as a completeness threshold for different model checking problems is the recurrence diameter, which is the length of the longest transition sequence in the state space of a transition system that does not traverse the same state twice.
It was proposed by \citeauthor{BiereCCZ99}~\citeyear{BiereCCZ99} as a completeness threshold.
\begin{mydef}[Recurrence Diameter]
\label{def:diamrd}
Let $\distinct(\state,\as)$ denote that all states traversed by executing $\as$ at $\state$ are distinct states.
The recurrence diameter is the length of the longest simple path in the state space, formally
\[
\recurrenceDiam(\delta) = \underset{\state \in \uniStates(\delta),\as \in \ases{\delta}, \distinct(\state,\as)}{\max}\;\; \cardinality{\as}
\]
\end{mydef}
 \begin{myeg}
\label{eg:dexpltrd}
For the system $\delta$ from Example~\ref{eg:factored}, the recurrence diameter is 3 as there are many paths with 3 actions in the state space that traverse distinct states, e.g. executing the action sequence $[\acta,\actb,\actc]$ at the state $\{\overline{\vara},\overline{\varb}\}$ traverses the distinct states $[\{\overline{\vara},\overline{\varb}\}, \{\vara,\varb\}, \{\overline{\vara},\varb\}, \{\vara,\overline{\varb}\}]$.
\end{myeg}
Note that in general the recurrence diameter is an upper bound on the diameter, and that it can be exponentially larger than the diameter~\cite{BiereCCZ99}.
However, it can still be exponentially smaller than the number of states in the state space, which would make it a practically useful completeness threshold.
The recurrence diameter is a completeness threshold for SAT-based planning and bounded model-checking of safety properties, as well as bounded model-checking of liveness properties, which was the original reason for its inception~\cite{BiereCCZ99}.
\begin{mythm}
\label{thm:rdCT}
For any planning problem $\planningproblem$, $\recurrenceDiam(\delta)$ is a completeness threshold for $\planningproblem$.
\end{mythm}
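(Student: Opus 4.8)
The plan is to show that every solution of $\planningproblem$ can be shortened to one of length at most $\recurrenceDiam(\delta)$ without increasing its cost, by cutting out the cycles it traverses in the state space. The clean way to package the repeated cycle-cutting is to pick, among all solutions of cost at most $\ases\cost(\as)$, one of minimal length and argue that it must be a simple path. Since this set is nonempty (it contains $\as$) and lengths are natural numbers, a minimal-length element exists by well-ordering.

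First I would fix an arbitrary solution $\as = [\act_1,\ldots,\act_n]$ and introduce the states it traverses from $\init$: set $x_0 = \init$ and $x_i = \exec{\init}{[\act_1,\ldots,\act_i]}$, so that $\goal \subseteq x_n$. The central step is an \emph{excision lemma}: if $x_i = x_j$ for some $i < j$, then deleting the block $\act_{i+1},\ldots,\act_j$ yields a strictly shorter sequence $\as''$ that is still a solution with $\ases\cost(\as'') \le \ases\cost(\as)$. That $\as''$ reaches the same final state follows from determinism of execution: after the prefix $[\act_1,\ldots,\act_i]$ we are at $x_i = x_j$, and applying the suffix $[\act_{j+1},\ldots,\act_n]$ from $x_j$ gives $x_n$, so $\goal \subseteq x_n = \exec{\init}{\as''}$. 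That the cost does not increase is immediate: $\ases\cost$ is additive over concatenation and every action cost lies in $\mathbb{N}$, so dropping a block can only remove non-negative summands. Finally $\listset(\as'') \subseteq \listset(\as) \subseteq \delta$, so $\as'' \in \ases{\delta}$ is a genuine action sequence.

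With the excision lemma in hand, I would let $\as'$ be a minimal-length solution with $\ases\cost(\as') \le \ases\cost(\as)$. If $\as'$ traversed some state twice, the lemma would produce a strictly shorter solution of no greater cost, contradicting minimality; hence $\as'$ traverses only distinct states, i.e.\ $\distinct(\init, \as')$ holds. Since $\init \in \uniStates(\delta)$, the definition of $\recurrenceDiam$ gives $\cardinality{\as'} \le \recurrenceDiam(\delta)$, while by construction $\ases\cost(\as') \le \ases\cost(\as)$. This is exactly the shorter, no-costlier solution required by the definition of a completeness threshold, so $\recurrenceDiam(\delta)$ qualifies.

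I expect the crux — rather than a genuine obstacle — to be the bookkeeping in the excision step, and in particular the cost part. Preserving the goal rests on execution being a well-defined deterministic function of the starting state, so that the two equal intermediate states lead to identical continuations; this is routine. The cost monotonicity under deletion rests specifically on costs being non-negative, and this is precisely the hypothesis that makes $\recurrenceDiam(\delta)$ work for \emph{cost}-optimal planning where the diameter fails: shortening the given plan itself keeps its cost, whereas the diameter would only guarantee some short path that might use expensive actions. Everything else reduces to the well-ordering principle and the additivity of $\ases\cost$.
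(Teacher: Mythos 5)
Your proposal is correct and takes essentially the same route as the paper's proof: your excision lemma is exactly the paper's cycle-cutting step (writing $\as=\as_1\cat\as_2\cat\as_3$ with $\exec{\state}{\as_1}=\exec{\state}{\as_1\cat\as_2}$ and dropping the nonempty block $\as_2$, with cost monotonicity from non-negative costs), and your bound on distinct-state sequences is the paper's Proposition~\ref{prop:remcycle}; the only difference is packaging, since the paper iterates the excision via complete induction on $\cardinality{\as}$ while you take a minimal-length no-costlier solution by well-ordering, which is the same argument in extremal form.
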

\begin{proof}
The proof depends on the following proposition.
\begin{myprop}
\label{prop:remcycle}
For an action sequence $\as\in\ases\delta$, if $\distinct(\state,\as)$, then $\cardinality{\as} \leq rd(\delta)$.
\end{myprop}
\noindent We now show that, given $\as\in\ases\delta$ and a state $\state\in\uniStates(\delta)$, there is an action sequence $\as'$ s.t.\ $\ases\cost(\as')\leq\ases\cost(\as)$, $\cardinality{\as'}\leq rd(\delta)$, and $\exec{\state}{\as} = \exec{\state}{\as'}$.
We do that by complete induction on $\as$.
The induction hypotheses would then state that there is such an $\as'$ that can be derived for each $\as_0$, if $\cardinality{\as_0}<\cardinality{\as}$.
If $\distinct(\state,\as)$ holds, then the proof is finished by Proposition~\ref{prop:remcycle}.
Otherwise, there are action sequences $\as_1$, $\as_2$, and $\as_3$, s.t.\ $\as_2$ is not empty, $\as=\as_1\cat\as_2\cat\as_3$, and $\exec{\state}{\as_1} = \exec{\state}{\as_1\cat\as_2}$, where $\cat$ denotes list appending.
Since $\exec{\state}{\as} = \exec{\state}{\as_1\cat\as_3}$, the proof is finished by applying the induction hypothesis to $\as_1\cat\as_3$.
\end{proof}

A problem with using the recurrence diameter as a completeness threshold is that the complexity of computing it is NP-hard~\cite{pardalos2004note} for explicitly represented digraphs, and that complexity is NEXP-hard~\cite{papadimitriou1986note} for succinctly represented digraphs, like STRIPS~\cite{fikes1971strips}.
Practically, the existing methods to compute the recurrence diameter have a doubly exponential worst case running time~\cite{KroeningS03,RD}, and they are only useful when applied to small abstractions in the context of compositionally computing upper bounds on other topological properties.
Furthermore, there is not a compositional algorithm that can compute upper bounds on the recurrence diameter using abstractions recurrence diameter.
Accordingly, the recurrence diameter cannot be practically used as a completeness threshold due to the absence of a practical way to compute it or tightly bound it.

\subsection{The traversal diameter}

Another topological property that was studied in the literature is the \emph{traversal diameter} which was introduced by~\citeauthor{abdulaziz:2019}~\citeyear{abdulaziz:2019}.
The traversal diameter is one less than the largest number of states that could be traversed by any path.
\begin{mydef}[Traversal Diameter]
\label{def:td}
Let $\sset(\state,\as)$ be the set of states traversed by executing $\as$ at $\state$.
The traversal diameter is
\[\traversalDiam(\delta) = \underset{\state \in \uniStates(\delta), \as \in \ases{\delta}}{\max}\;\; \cardinality{\sset(\state,\as)} - 1.\]
\end{mydef}
 \begin{myeg}
\label{eg:td}
Consider a factored system whose state space is shown in Figure~\ref{fig:stateDiag2}.
For this system, the traversal diamter and the recurence diameter are both $1$.
Consider another factored system whose state space is shown in Figure~\ref{fig:stateDiag3}.
For this system, the traversal diamter is 3 and the recurence diameter is $2$.
\end{myeg}
 \citeauthor{abdulaziz:2019}~\citeyear{abdulaziz:2019} showed that the traversal diameter is an upper bound on the recurrence diameter.
Since the traversal diameter is an upper bound on the recurrence diameter, it is then a completeness threshold.
He also showed that it can be exponentially smaller than the number of states in the state space, and that it can be exponentially larger than the recurrence diameter.
Computing the traversal diameter can be done in a worst case running time that is linear in the size of the state space, which is exponentially better than the time needed to compute the recurrence diameter.
Furthermore, the traversal diameter is compositionally is via partitioning the set of state variables: an upper bound on the traversal diameter is the product of the traversal diameters of the different projections of the problem's factored transition system on each of the state variables equivalence classes.
Although the traversal diameter has the advantage of relatively easy computation with a compositional bounding method, the fact that the traversal diameter is bounded by multiplying all projection traversal diameters leads to computing bounds that are too loose to be of practical value.

\subsection{The sublist diameter}

Another topological property that can be used as a completeness threshold is the sublist diameter, defined below.

\begin{mydef}[Sublist Diameter]
\label{def:ell}
A list $\as'$ is a sublist of $\as$, written $\sublist{\as'}{\as}$, iff all the members of $\as'$ occur in the same order in $\as$.
The sublist diameter, $\ell(\delta)$, is the length of the longest shortest equivalent sublist to any action sequence $\as \in \ases{\delta}$ starting at any state $\state \in \uniStates(\delta)$. Formally,
\[\ell(\delta) = \underset{\state \in \uniStates(\delta), \as \in \ases{\delta}}{\max} \;\;\; \underset{\exec{\state}{\as} = \exec{\state}{\as'}, \subscriptsublist{\as'}{\as} }{\min}\; \cardinality{\as'}.\]
\end{mydef} \begin{myeg}
\label{eg:ell}
Consider the factored system $\delta$ from Example~\ref{eg:factored}.
For that system the sublist diameter is 1, since for any $\as\in\ases\delta$, executing the last action in $\as$ will reach the same state reached by executing $\delta$.
\end{myeg}
The sublist diameter was first conceived by \cite{abdulaziz2015verified} for theoretical purposes, where it was used to show that the diameter can be upper bounded by the projections' topological properties, if the projections were induced by an acyclic variable dependency graph.
The way they showed that was by showing that \begin{enumerate*} \item the sublist diameter is an upper bound on the diameter, \item the sublist diameter is a lower bound on the recurrence diameter, and, most importantly, \item that the sublist diameter can be upper bounded by projections' sublist diameters.\end{enumerate*}
Those three properties would make the sublist diameter a very appealing completeness threshold, since it is relatively tight and since it can be upper bounded practically via compositional methods.
The following theorem shows that the sublist diameter is indeed a completeness threshold.
\begin{mythm}
\label{thm:ellCT}
For any planning problem $\planningproblem$, $\ell(\delta)$ is a completeness threshold for $\planningproblem$.
\end{mythm}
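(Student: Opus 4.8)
The plan is to read the result off the definition of $\ell(\delta)$ directly, which makes the argument noticeably shorter than the inductive cycle-removal used for Theorem~\ref{thm:rdCT}, since the sublist diameter already builds the "shortening" into its definition. Fix an arbitrary solution $\as$ of $\planningproblem$, so that $\goal\subseteq\exec{\init}{\as}$. Because $\init\in\uniStates(\delta)$, I may instantiate the defining maximum of $\ell(\delta)$ at the pair $(\init,\as)$: the inner minimum is taken over the nonempty set of equivalent sublists of $\as$ (nonempty since $\as$ is a sublist of itself), so it is attained by some $\as'$ with $\subscriptsublist{\as'}{\as}$, $\exec{\init}{\as'}=\exec{\init}{\as}$, and $\cardinality{\as'}\leq\ell(\delta)$, the length bound holding because $\ell(\delta)$ is the maximum of exactly these inner minima over all states and sequences. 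Since $\subscriptsublist{\as'}{\as}$ and $\as\in\ases\delta$, every element of $\as'$ lies in $\listset(\as)\subseteq\delta$, hence $\as'\in\ases\delta$; and since $\goal\subseteq\exec{\init}{\as}=\exec{\init}{\as'}$, the sequence $\as'$ is itself a solution of $\planningproblem$ of length at most $\ell(\delta)$.

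It then remains only to show that this shortening cannot increase cost, i.e.\ $\ases\cost(\as')\leq\ases\cost(\as)$. Here I would isolate a small auxiliary fact: whenever $\subscriptsublist{\as'}{\as}$, we have $\ases\cost(\as')\leq\ases\cost(\as)$. This follows by a straightforward induction on the sublist relation, using that each action of $\as'$ occupies a distinct position of $\as$, so the multiset of actions of $\as'$ is a sub-multiset of that of $\as$; equivalently, $\as$ is obtained from $\as'$ by inserting finitely many actions, each contributing a summand to $\ases\cost$ that is non-negative because $\cost$ maps into $\mathbb{N}$. Combining this with the previous paragraph yields a solution $\as'$ with $\cardinality{\as'}\leq\ell(\delta)$ and $\ases\cost(\as')\leq\ases\cost(\as)$, which is precisely the defining property of a completeness threshold, and the proof is complete.

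The genuine content, and the only step that is not pure bookkeeping, is the cost-monotonicity lemma, and its one essential hypothesis is the non-negativity of costs: a sublist can only ever drop actions, never add them, so passing to $\as'$ can only remove non-negative summands. I therefore expect the main (if modest) work to be the clean statement and induction for cost-monotonicity over $\subscriptsublist{\as'}{\as}$; the reachability equality and the membership $\as'\in\ases\delta$ are immediate once the definition of $\ell(\delta)$ is unpacked at $\state=\init$, and no complete induction on the length of $\as$ is needed, in contrast to the recurrence-diameter argument.
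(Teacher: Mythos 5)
Your proof is correct and follows essentially the same route as the paper: the paper likewise obtains a short equivalent sublist $\as'$ from the definition of $\ell(\delta)$ (stated there as Proposition~\ref{prop:ellShorter}, instantiated at the initial state) and then concludes with exactly your cost-monotonicity observation that $\subscriptsublist{\as'}{\as}$ and non-negative costs give $\ases\cost(\as')\leq\ases\cost(\as)$. You merely make explicit some bookkeeping the paper leaves implicit (attainment of the inner minimum, $\as'\in\ases{\delta}$, and the induction behind cost monotonicity), so no substantive difference remains.
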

\begin{proof}
The proof depends on the following proposition.
\begin{myprop}
\label{prop:ellShorter}
For any $\delta$, $\state\in\uniStates(\delta)$, and $\as\in\ases{\delta}$, there is an $\as'$ s.t.\ $\sublist{\as'}{\as}$, $\cardinality{\as} \leq \ell(\delta)$, and $\exec{\state}{\as} = \exec{\state}{\as'}$.
\end{myprop}
\noindent Given a solution $\as$ for $\planningproblem$, we obtain $\as'$, which is the witness of Proposition~\ref{prop:ellShorter}.
Since $\sublist{\as'}{\as}$, we have that the cost of $\ases\cost(\as')\leq\ases\cost(\as)$.
This finishes our proof.
\end{proof}

\subsection{The subset diameter}
\label{sec:subsetDiam}

As we stated earlier, the sublist diameter has many advantages as completeness threshold, in particular that it is relatively tight and that it has effective methods to compute upper bounds on it.
In this section we study how tight can a computed completeness threshold be.
Consider the following topological property.

\begin{mydef}[Subset Diameter]
\label{def:ell}
A list $\as'$ is a subset of $\as$, written $\as' \subseteq \as$, iff all the members of $\as'$ occur in $\as$.
The subset diameter, $\subsetDiam(\delta)$, is the length of the longest shortest equivalent subset to any action sequence $\as \in \ases{\delta}$ starting at any state $\state \in \uniStates(\delta)$. Formally,
\[\subsetDiam(\delta) = \underset{\state \in \uniStates(\delta), \as \in \ases{\delta}}{\max} \;\;\; \underset{\exec{\state}{\as} = \exec{\state}{\as'}, \as'\subseteq\as }{\min}\; \cardinality{\as'}.\]
\end{mydef}
 \begin{myeg}
\label{eg:subsetDiam}
Consider the factored system $\{\acta\equiv(\emptyset,\{\vara,\varc\}),\actb\equiv(\emptyset,\{\overline{\vara},\varb\}),\actc\equiv(\emptyset,\{\vara\})\}$.
The sublist diameter of this system is 3, because there is not a sublist of the action sequence $[\acta,\actb,\actc]$ that can reach the state $\{\vara,\varb,\varc\}$ from $\{\overline{\vara,\varb,\varc}\}$.
On the other hand, the subset diameter of this system is 2, since $[\actb,\acta]$ is a subset of $[\acta,\actb,\actc]$ that can reach $\{\vara,\varb,\varc\}$ from $\{\overline{\vara,\varb,\varc}\}$.
\end{myeg}
It should be clear that the following holds.
\begin{myprop}
\label{prop:diamlesublistDiamleell}
For any $\delta$, $d(\delta)\leq\subsetDiam(\delta)\leq\ell(\delta)$.
\end{myprop}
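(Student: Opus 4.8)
The plan is to prove both inequalities by the same device: fix a state $\state\in\uniStates(\delta)$ and an action sequence $\as\in\ases\delta$, compare the sets of witnesses $\as'$ over which the inner minimum ranges in each of the three definitions, and then lift the resulting pointwise comparison of the inner minima through the outer $\max$. The one fact I would record up front is that in all three definitions the inner minimisation is over a nonempty set, since $\as$ itself is trivially equivalent to $\as$, is a sublist of itself, and is a subset of itself; hence every inner $\min$ is well defined and attained.

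For the right inequality $\subsetDiam(\delta)\leq\ell(\delta)$, the key observation is that every sublist is a subset: if $\sublist{\as'}{\as}$, then the members of $\as'$ occur in order in $\as$, so in particular they occur in $\as$, i.e.\ $\as'\subseteq\as$. Thus, for fixed $\state$ and $\as$, the set of sublist witnesses is contained in the set of subset witnesses, the two sharing the identical constraint $\exec{\state}{\as}=\exec{\state}{\as'}$. Minimising $\cardinality{\as'}$ over the larger (subset) collection therefore yields a value no larger than minimising over the smaller (sublist) collection.

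For the left inequality $d(\delta)\leq\subsetDiam(\delta)$, I would argue the analogous containment one level up: any subset witness $\as'\subseteq\as$ is automatically a valid action sequence, since $\listset(\as')\subseteq\listset(\as)\subseteq\delta$, and it satisfies the same successor constraint $\exec{\state}{\as}=\exec{\state}{\as'}$; hence every subset witness is also a witness for the diameter, whose inner minimisation carries no subset restriction at all. Consequently, for fixed $\state$ and $\as$, the diameter minimises over a superset of the subset-diameter witnesses and returns a value no larger.

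Finally, in each case I would lift the pointwise inequality between the inner minima to the outer maxima: if $f(\state,\as)\leq g(\state,\as)$ for every $\state$ and $\as$, then $f(\state,\as)\leq g(\state,\as)\leq\max_{\state,\as} g$, so $\max_{\state,\as} g$ bounds $f$ pointwise and therefore $\max_{\state,\as} f\leq\max_{\state,\as} g$. The only real care needed is to keep the direction of the inequalities straight—minimising over a \emph{larger} witness set produces a \emph{smaller} value—so I expect the main (and modest) obstacle to be this bookkeeping rather than any genuine mathematical difficulty.
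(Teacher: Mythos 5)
Your proof is correct and matches the paper, which offers no explicit argument for this proposition (it is introduced with ``it should be clear that the following holds''): the intended justification is precisely your witness-set containment, namely that every sublist of $\as$ is a subset of $\as$, and every subset of $\as$ is a valid action sequence in $\ases{\delta}$ satisfying the same constraint $\exec{\state}{\as}=\exec{\state}{\as'}$, so each inner minimum is taken over a progressively larger set and can only decrease. Your two additional points of care---that each inner minimum is attained because $\as$ itself is always a witness, and that a pointwise inequality between the inner minima lifts through the outer maximum---are exactly the right bookkeeping and complete the argument.
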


Furthermore, using an argument similar to the one used to prove Theorem~\ref{prop:ellShorter}, we have the following.
\begin{mythm}
\label{thm:subsetDiamCT}
For any planning problem $\planningproblem$, $\subsetDiam(\delta)$ is a completeness threshold for $\planningproblem$.
\end{mythm}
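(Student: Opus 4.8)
The plan is to mirror the proof of Theorem~\ref{thm:ellCT}, replacing its sublist-shortening step (Proposition~\ref{prop:ellShorter}) with the analogous subset-shortening statement, which here falls straight out of the definition of $\subsetDiam$. Concretely, I would first record the following analogue of Proposition~\ref{prop:ellShorter}: for any $\delta$, any $\state\in\uniStates(\delta)$, and any $\as\in\ases{\delta}$, there is an $\as'$ with $\as'\subseteq\as$, $\cardinality{\as'}\leq\subsetDiam(\delta)$, and $\exec{\state}{\as}=\exec{\state}{\as'}$. Unlike the sublist case, no separate induction is needed: with $\state$ and $\as$ fixed, the inner minimisation in the definition of $\subsetDiam$ already selects a shortest subset $\as'$ of $\as$ reaching the same state, and its length is bounded above by the outer maximum, which is precisely $\subsetDiam(\delta)$.

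Next I would carry out the completeness-threshold argument itself. Given an arbitrary solution $\as$ of $\planningproblem$, I instantiate the subset-shortening statement at $\state=\init$ to obtain $\as'$. Since $\exec{\init}{\as'}=\exec{\init}{\as}$, and $\goal\subseteq\exec{\init}{\as}$ because $\as$ is a solution, we get $\goal\subseteq\exec{\init}{\as'}$, so $\as'$ is again a solution of $\planningproblem$; moreover $\cardinality{\as'}\leq\subsetDiam(\delta)$. It then only remains to verify the cost inequality $\ases\cost(\as')\leq\ases\cost(\as)$, after which $\as'$ witnesses that $\subsetDiam(\delta)$ meets the definition of a completeness threshold.

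The only substantive difference from the sublist proof, and the step I expect to be the main obstacle, is this cost comparison. For a sublist it is immediate that $\as'$ reuses a sub-multiset of the occurrences of $\as$; for a subset I would lean on the fact that $\as'\subseteq\as$ likewise means the multiset of action occurrences in $\as'$ is contained in that of $\as$, allowing reordering (as in Example~\ref{eg:subsetDiam}) but introducing no occurrence absent from $\as$. This reading is consistent with Proposition~\ref{prop:diamlesublistDiamleell}, since every sublist is then a subset, forcing $\subsetDiam(\delta)\leq\ell(\delta)$. Because $\ases\cost$ sums per-occurrence costs and every cost lies in $\mathbb{N}$, discarding occurrences can only lower the total, giving $\ases\cost(\as')\leq\ases\cost(\as)$. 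The delicate point is thus pinning down the sub-multiset reading of $\subseteq$ precisely enough that this inequality is airtight; once that is fixed, the remainder is a direct unfolding of the relevant definitions.
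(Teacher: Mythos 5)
Your proposal is correct and coincides with the paper's own (sketched) proof: the paper justifies Theorem~\ref{thm:subsetDiamCT} by exactly the argument you give, namely the subset analogue of Proposition~\ref{prop:ellShorter} (which, as you note, is immediate from the max--min definition of $\subsetDiam$, since $\as$ is itself an admissible candidate in the inner minimisation) instantiated at $\init$, with $\ases\cost(\as')\leq\ases\cost(\as)$ holding because $\as'$ reuses only occurrences from $\as$ and all costs lie in $\mathbb{N}$. Your insistence on the sub-multiset (occurrence-wise) reading of $\as'\subseteq\as$ is the right call: the paper leaves this implicit, and it is precisely what makes the cost inequality --- and hence the analogy with Theorem~\ref{thm:ellCT} --- airtight.
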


More interestingly, we show that the subset diameter is the smallest completeness threshold that can be computed for a planning problem, if the action costs and the initial and goal states are not taken into consideration.

\begin{mythm}
\label{thm:ellCTTight}
For any factored transition system $\delta$, there is a planning problem $\planningproblem$ s.t.\ $\planningproblem.\delta=\delta$ and there is a solution $\as$ for $\planningproblem$ s.t.\ $\cardinality{\as}=\ell(\delta)$ and for any solution $\as'$, if $\cardinality{\as'}<\cardinality{\as}$, then $\ases\cost(\as') > \ases\cost(\as)$.
\end{mythm}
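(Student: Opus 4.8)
The plan is to build the witnessing problem directly from a maximiser of the sublist diameter. By Definition~\ref{def:ell} there exist $\state^*\in\uniStates(\delta)$ and $\as^*\in\ases{\delta}$ whose shortest equivalent sublist has length $\ell(\delta)$; let $\as$ be such a shortest sublist, so that $\sublist{\as}{\as^*}$, $\cardinality{\as}=\ell(\delta)$, and $\exec{\state^*}{\as}=\exec{\state^*}{\as^*}$. First I would record the key minimality property of this choice: $\as$ has \emph{no} proper equivalent sublist, i.e.\ for every $\as''$ with $\sublist{\as''}{\as}$ and $\cardinality{\as''}<\ell(\delta)$ we have $\exec{\state^*}{\as''}\neq\exec{\state^*}{\as}$ --- otherwise $\as''$ would itself be a shorter sublist of $\as^*$ equivalent to $\as^*$, contradicting the choice of $\as$. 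I then fix $\init=\state^*$ and take $\goal=\exec{\state^*}{\as}$ as a full state on $\dom(\delta)$; this is a legal goal, and since $\goal\subseteq\exec{\init}{\as}$ it makes $\as$ a solution of length exactly $\ell(\delta)$, as the theorem demands.

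Next I would choose the cost mapping to isolate $\as$. Put $\cost(\act)=0$ for every $\act\in\listset(\as)$ and $\cost(\act)=1$ for every $\act\in\delta\setminus\listset(\as)$, so that $\ases\cost(\as)=0$. With this mapping the whole theorem collapses to one claim: every solution $\as'$ with $\cardinality{\as'}<\ell(\delta)$ must satisfy $\ases\cost(\as')>0$, equivalently every such $\as'$ must use at least one action outside $\listset(\as)$. Thus it suffices to prove that $\goal$ cannot be reached from $\init$ in fewer than $\ell(\delta)$ steps using only actions drawn from $\listset(\as)$; once this is shown, any shorter solution pays a unit cost on some non-witness action and the strict inequality $\ases\cost(\as')>0=\ases\cost(\as)$ follows at once.

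The crux, and the step I expect to be the main obstacle, is exactly this last lower bound. The minimality of the sublist witness immediately kills all shorter \emph{sublists} of $\as$, but a candidate short solution over $\listset(\as)$ may \emph{reorder} or \emph{repeat} the witness actions, and such sequences need not be sublists of $\as$. These are precisely the possibilities that separate the sublist diameter from the subset diameter (Proposition~\ref{prop:diamlesublistDiamleell}, Example~\ref{eg:subsetDiam}), and they cannot be defeated by the $0/1$ cost split alone: total cost depends only on the multiset of actions used, so it is invariant under reordering and only decreases when actions are dropped. Consequently the argument goes through only if the chosen maximiser admits no strictly shorter \emph{equivalent subset}, i.e.\ if the shortest subset of $\as$ reaching $\goal$ from $\init$ also has length $\ell(\delta)$; securing a sublist-diameter maximiser that is simultaneously subset-irreducible in this sense is the real technical heart of the proof. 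My plan is therefore to attempt a normalisation of the maximiser --- selecting $(\state^*,\as)$ so that each witness action establishes a distinct coordinate of $\goal$ that no later action overwrites, forcing every $\listset(\as)$-only plan to contain all $\ell(\delta)$ witness actions in the order their effects must be laid down --- and to fall back on the subset-irreducible witness underlying Theorem~\ref{thm:subsetDiamCT} exactly when this rigidity is what makes the strict separation of shorter solutions possible.
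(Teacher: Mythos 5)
Your diagnosis of the crux is exactly right, and what you call the ``fallback'' is, in fact, the paper's entire proof: the paper never works with a sublist-diameter maximiser. It invokes Proposition~\ref{prop:subsetDiameq} to obtain a pair $(\state_0,\as_0)$ with $\cardinality{\as_0}=\subsetDiam(\delta)$ that admits \emph{no} strictly shorter equivalent subset, sets $\init=\state_0$ and $\goal=\exec{\state_0}{\as_0}$ (a full state, which --- as you also observed --- is what turns goal containment into end-state equality), assigns cost $0$ to the actions of $\as_0$ and cost $1$ to all other actions, and concludes: any strictly shorter solution must contain an action outside $\as_0$, since otherwise it would be a shorter equivalent subset of $\as_0$, contradicting the choice of witness; hence it costs at least $1>0=\ases\cost(\as_0)$. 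Note that, read against the surrounding prose (``the subset diameter is the smallest completeness threshold\dots''), the $\ell(\delta)$ in the printed theorem statement is evidently a slip for $\subsetDiam(\delta)$: the paper's proof establishes the claim only with witness length $\subsetDiam(\delta)$.

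This also tells you that your \emph{primary} plan --- normalising a sublist-diameter maximiser so that each witness action lays down a distinct, never-overwritten coordinate of the goal --- is not merely hard but impossible whenever $\subsetDiam(\delta)<\ell(\delta)$, a situation Example~\ref{eg:subsetDiam} shows does occur. By the very definition of the subset diameter, \emph{every} action sequence of length $\ell(\delta)$, however you select or normalise it, admits an equivalent subset of length at most $\subsetDiam(\delta)<\ell(\delta)$ built only from its own actions; under any cost scheme that zeroes the witness actions (and, as you correctly note, cost depends only on the multiset of actions used), that subset is a strictly shorter solution of cost $0=\ases\cost(\as)$, killing the strict inequality. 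So no choice of maximiser rescues the length-$\ell(\delta)$ claim via this construction. The correct resolution is the one you reach at the end: start from the subset-irreducible witness (Proposition~\ref{prop:subsetDiameq}, which lives inside this theorem's proof rather than under Theorem~\ref{thm:subsetDiamCT}) and prove the statement with $\subsetDiam(\delta)$ in place of $\ell(\delta)$; with that substitution, your construction and the paper's proof coincide.
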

\begin{proof}
Our proof depends on the following proposition.
\begin{myprop}
\label{prop:subsetDiameq}
For any factored transition system $\delta$, there is a state $\state\in\uniStates(\delta)$ and an action sequence $\as\in\ases{\delta}$ s.t.\ $\cardinality{\as} = \subsetDiam(\delta)$ and there is not any action sequence $\as'$ s.t.\ $\as'\subseteq\as$ and $\exec{\state}{\as} = \exec{\state}{\as'}$ and $\cardinality{\as'} < \cardinality{\as}$.
\end{myprop}
\noindent Obtain a state $\state_0$ and an action sequence $\as_0$ that are the witnesses for Proposition~\ref{prop:subsetDiameq}.
Let $\cost = \{\act \mapsto 0 \mid \act\in\as_0\} \cup \{\act \mapsto 1 \mid \act\not\in\as_0\}$.
We now construct the required planning problem $\planningproblem$ by letting $\state_0$ be its initial state, $\exec{\state_0}{\as}$ be its goal, $\delta$ be its factored transition system and $\cost$ be its cost function.
It should be clear that $\as_0$ is a plan for $\planningproblem$.
Since $\state_0$ and $\as_0$ are the witnesses of Proposition~\ref{prop:subsetDiameq}, we have that any solution $\as'$ for $\planningproblem$ that is shorter than $\as_0$ will have at least an action not from $\as_0$.
Accordingly, we have that $\ases\cost(\as_0)<1\leq\ases\cost(\as')$, which finishes our proof.
\end{proof}

In this section we primarily focused on the theoretical limit on the tightness of the completeness thresholds and thus devised the subset diameter and showed that it is the tightest.
We did not consider on whether the subset diameter can be computed or approximated.
Proposition~\ref{prop:diamlesublistDiamleell} shows that we can use the same methods to bound from Abdulaziz et al.\ to compute a bound on the subset diameter.
However, as shown in Example~\ref{eg:subsetDiam}, the subset diameter can be strictly smaller than the sublist diameter, so an interesting open question is whether there is an exponential separation between them, i.e.\ whether there is a class of factored systems whose subset diameters are exponentially smaller than their sublist diameters.
If this were true, an interesting question is whether there are methods to bound or compute the subset diameter that can exploit this tightness.
\section{A SAT-Encoding for Planning with Costs}
\label{sec:encoding}
{
\setlength\tabcolsep{1pt}
\def\arraystretch{0.9}
\begin{table*}[h]
\centering
\setlength{\tabcolsep}{1.8pt}
    \begin{tabularx}{0.99\textwidth}{ c | c c | c c | c c | c c | c c | c c | c c | c c | c c | c c | c c | c c | c c | c c | c c | c c | c c c}
    \hline
          \multicolumn{1}{c}{\scriptsize}       &\multicolumn{6}{c}{\scriptsize Madagascar}       &\multicolumn{24}{c}{\scriptsize Kissat} &\multicolumn{2}{c}{\scriptsize Total} &\multicolumn{3}{c}{\scriptsize LM-cut}\\ 
    \hline
          \multicolumn{1}{c}{\scriptsize}       &\multicolumn{6}{c}{\scriptsize} &\multicolumn{12}{c}{\scriptsize No Symmetry Breaking} &\multicolumn{12}{c}{\scriptsize Symmetry Breaking} &\multicolumn{5}{c}{\scriptsize}\\ 
    \hline
          \multicolumn{1}{c}{\scriptsize}       &\multicolumn{2}{c}{\scriptsize Seq}       &\multicolumn{2}{c}{\scriptsize $\forall$}       &\multicolumn{2}{c}{\scriptsize $\exists$}       &\multicolumn{4}{c}{\scriptsize Seq}       &\multicolumn{4}{c}{\scriptsize $\forall$}       &\multicolumn{4}{c}{\scriptsize $\exists$} &\multicolumn{4}{c}{\scriptsize Seq}       &\multicolumn{4}{c}{\scriptsize $\forall$}       &\multicolumn{4}{c}{\scriptsize $\exists$} &\multicolumn{5}{c}{\scriptsize}\\ 
    \hline
          \multicolumn{1}{c}{\scriptsize Domain}       &\multicolumn{6}{c}{\scriptsize}   &\multicolumn{2}{c}{\scriptsize UNSAT} &\multicolumn{2}{c}{\scriptsize SAT}       &\multicolumn{2}{c}{\scriptsize UNSAT} &\multicolumn{2}{c}{\scriptsize SAT}       &\multicolumn{2}{c}{\scriptsize UNSAT} &\multicolumn{2}{c}{\scriptsize SAT} &\multicolumn{2}{c}{\scriptsize UNSAT} &\multicolumn{2}{c}{\scriptsize SAT}       &\multicolumn{2}{c}{\scriptsize UNSAT} &\multicolumn{2}{c}{\scriptsize SAT}       &\multicolumn{2}{c}{\scriptsize UNSAT} &\multicolumn{2}{c}{\scriptsize SAT} &\multicolumn{5}{c}{\scriptsize }\\ 
    \hline
{\scriptsize \bf logistics (406)}       &{\scriptsize 61}       &{\scriptsize 29}       &{\scriptsize 29}       &{\scriptsize 13}       &{\scriptsize 29}       &{\scriptsize 13}       &{\scriptsize 62}       &{\scriptsize 29}       &{\scriptsize 62}       &{\scriptsize 29}       &{\scriptsize 41}       &{\scriptsize \bf 41}       &{\scriptsize 56}       &{\scriptsize 33}       &{\scriptsize 55}       &{\scriptsize 37}       &{\scriptsize 56}       &{\scriptsize 37}       &{\scriptsize 63}       &{\scriptsize 29}       &{\scriptsize 63}       &{\scriptsize 29}       &{\scriptsize 216}       &{\scriptsize \bf 41}       &{\scriptsize \bf 221}       &{\scriptsize 35}       &{\scriptsize 37}       &{\scriptsize 33}       &{\scriptsize 37}       &{\scriptsize 30}       &{\scriptsize 236}       &{\scriptsize 41} &{\scriptsize 83} &{\scriptsize 46}  &{\scriptsize 0}\\  
{\scriptsize \bf rover (30)}       &{\scriptsize 14}       &{\scriptsize 4}       &{\scriptsize 5}       &{\scriptsize 4}       &{\scriptsize 6}       &{\scriptsize 4}       &{\scriptsize 15}       &{\scriptsize 4}       &{\scriptsize \bf 16}       &{\scriptsize 4}       &{\scriptsize 11}       &{\scriptsize 4}       &{\scriptsize 13}       &{\scriptsize 4}       &{\scriptsize 9}       &{\scriptsize 4}       &{\scriptsize 11}       &{\scriptsize 4}       &{\scriptsize 13}       &{\scriptsize 4}       &{\scriptsize 14}       &{\scriptsize 4}       &{\scriptsize 9}       &{\scriptsize 4}       &{\scriptsize 9}       &{\scriptsize 4}       &{\scriptsize 7}       &{\scriptsize 4}       &{\scriptsize 6}       &{\scriptsize 4}       &{\scriptsize 20}       &{\scriptsize 4} &{\scriptsize 11} &{\scriptsize 10}  &{\scriptsize 0}\\  
{\scriptsize nomystery (24)}       &{\scriptsize \bf 11}       &{\scriptsize \bf 10}       &{\scriptsize 5}       &{\scriptsize 2}       &{\scriptsize 5}       &{\scriptsize 2}       &{\scriptsize \bf 11}       &{\scriptsize \bf 10}       &{\scriptsize \bf 11}       &{\scriptsize \bf 10}       &{\scriptsize 10}       &{\scriptsize \bf 10}       &{\scriptsize 10}       &{\scriptsize \bf 10}       &{\scriptsize 10}       &{\scriptsize \bf 10}       &{\scriptsize \bf 11}       &{\scriptsize \bf 10}       &{\scriptsize \bf 11}       &{\scriptsize 9}       &{\scriptsize \bf 11}       &{\scriptsize 9}       &{\scriptsize 9}       &{\scriptsize \bf 10}       &{\scriptsize 9}       &{\scriptsize \bf 10}       &{\scriptsize 7}       &{\scriptsize 6}       &{\scriptsize 7}       &{\scriptsize 6}       &{\scriptsize 11}       &{\scriptsize 10} &{\scriptsize 6} &{\scriptsize 3}  &{\scriptsize 7}\\  
{\scriptsize \bf zeno (50)}       &{\scriptsize 19}       &{\scriptsize \bf 15}       &{\scriptsize 9}       &{\scriptsize 7}       &{\scriptsize 13}       &{\scriptsize 7}       &{\scriptsize 19}       &{\scriptsize \bf 15}       &{\scriptsize 19}       &{\scriptsize \bf 15}       &{\scriptsize 19}       &{\scriptsize 13}       &{\scriptsize 19}       &{\scriptsize 13}       &{\scriptsize 20}       &{\scriptsize 11}       &{\scriptsize 19}       &{\scriptsize 13}       &{\scriptsize 22}       &{\scriptsize\bf 15}       &{\scriptsize 22}       &{\scriptsize \bf 15}       &{\scriptsize \bf 35}       &{\scriptsize 13}       &{\scriptsize \bf 35}       &{\scriptsize 13}       &{\scriptsize 18}       &{\scriptsize 11}       &{\scriptsize 22}       &{\scriptsize 11}       &{\scriptsize 40}       &{\scriptsize 15} &{\scriptsize 28} &{\scriptsize 21}  &{\scriptsize 0}\\  
{\scriptsize hiking (20)}       &{\scriptsize 7}       &{\scriptsize \bf 5}       &{\scriptsize 3}       &{\scriptsize 1}       &{\scriptsize 2}       &{\scriptsize 1}       &{\scriptsize 7}       &{\scriptsize \bf 5}       &{\scriptsize 7}       &{\scriptsize\bf 5}       &{\scriptsize 5}       &{\scriptsize\bf 5}       &{\scriptsize 5}       &{\scriptsize 4}       &{\scriptsize 5}       &{\scriptsize\bf 5}       &{\scriptsize 5}       &{\scriptsize 4}       &{\scriptsize 7}       &{\scriptsize 5}       &{\scriptsize 7}       &{\scriptsize\bf 5}       &{\scriptsize \bf 10}       &{\scriptsize --- }       &{\scriptsize \bf 10}       &{\scriptsize --- }       &{\scriptsize 4}       &{\scriptsize 2}       &{\scriptsize 4}       &{\scriptsize 2}       &{\scriptsize 16}       &{\scriptsize 5} & {\scriptsize 5} &{\scriptsize 1}  &{\scriptsize 4}\\  
{\scriptsize \bf Transport (40)}       &{\scriptsize --- }       &{\scriptsize --- }       &{\scriptsize --- }       &{\scriptsize --- }       &{\scriptsize --- }       &{\scriptsize --- }       &{\scriptsize --- }       &{\scriptsize --- }       &{\scriptsize --- }       &{\scriptsize --- }       &{\scriptsize --- }       &{\scriptsize --- }       &{\scriptsize --- }       &{\scriptsize --- }       &{\scriptsize --- }       &{\scriptsize\bf 1}       &{\scriptsize --- }       &{\scriptsize\bf 1}       &{\scriptsize 9}       &{\scriptsize --- }       &{\scriptsize 9}       &{\scriptsize --- }       &{\scriptsize\bf 32}       &{\scriptsize --- }       &{\scriptsize\bf 32}       &{\scriptsize --- }       &{\scriptsize 7}       &{\scriptsize --- }       &{\scriptsize 8}       &{\scriptsize --- }       &{\scriptsize 33}       &{\scriptsize 1} &{\scriptsize 8} &{\scriptsize 0}  &{\scriptsize 1}\\  
{\scriptsize \bf woodworking (20)}       &{\scriptsize 2}       &{\scriptsize --- }       &{\scriptsize 1}       &{\scriptsize --- }       &{\scriptsize 2}       &{\scriptsize --- }       &{\scriptsize 3}       &{\scriptsize --- }       &{\scriptsize\bf 7}       &{\scriptsize --- }       &{\scriptsize 4}       &{\scriptsize --- }       &{\scriptsize 6}       &{\scriptsize --- }       &{\scriptsize 4}       &{\scriptsize --- }       &{\scriptsize 5}       &{\scriptsize --- }       &{\scriptsize 3}       &{\scriptsize --- }       &{\scriptsize 5}       &{\scriptsize --- }       &{\scriptsize 2}       &{\scriptsize --- }       &{\scriptsize 2}       &{\scriptsize --- }       &{\scriptsize 1}       &{\scriptsize --- }       &{\scriptsize 1}       &{\scriptsize --- }       &{\scriptsize 9}       &{\scriptsize 0} &{\scriptsize 11} &{\scriptsize 0}  &{\scriptsize 0}\\  
{\scriptsize visitall (50)}       &{\scriptsize\bf 42}       &{\scriptsize\bf 17}       &{\scriptsize 15}       &{\scriptsize 8}       &{\scriptsize 15}       &{\scriptsize 8}       &{\scriptsize\bf 42}       &{\scriptsize\bf 17}       &{\scriptsize\bf 42}       &{\scriptsize\bf 17}       &{\scriptsize 20}       &{\scriptsize 14}       &{\scriptsize 22}       &{\scriptsize 14}       &{\scriptsize 20}       &{\scriptsize 14}       &{\scriptsize 22}       &{\scriptsize 14}       &{\scriptsize\bf 42}       &{\scriptsize\bf 17}       &{\scriptsize\bf 42}       &{\scriptsize\bf 17}       &{\scriptsize 20}       &{\scriptsize 15}       &{\scriptsize 25}       &{\scriptsize 15}       &{\scriptsize 20}       &{\scriptsize 15}       &{\scriptsize 25}       &{\scriptsize 15}       &{\scriptsize 42}       &{\scriptsize 17} &{\scriptsize 16} &{\scriptsize 16}  &{\scriptsize 1}\\  
{\scriptsize satellite (10)}       &{\scriptsize\bf 6}       &{\scriptsize 4}       &{\scriptsize\bf 6}       &{\scriptsize 4}       &{\scriptsize\bf 6}       &{\scriptsize 3}       &{\scriptsize 8}       &{\scriptsize 5}       &{\scriptsize\bf 10}       &{\scriptsize 5}       &{\scriptsize 9}       &{\scriptsize 5}       &{\scriptsize\bf 10}       &{\scriptsize 5}       &{\scriptsize 9}       &{\scriptsize 5}       &{\scriptsize\bf 10}       &{\scriptsize 4}       &{\scriptsize 9}       &{\scriptsize 5}       &{\scriptsize\bf 10}       &{\scriptsize 5}       &{\scriptsize 9}       &{\scriptsize\bf 6}       &{\scriptsize 9}       &{\scriptsize 5}       &{\scriptsize 7}       &{\scriptsize 5}       &{\scriptsize 7}       &{\scriptsize 4}       &{\scriptsize 10}       &{\scriptsize 6} &{\scriptsize 10} &{\scriptsize 9}  &{\scriptsize 0}\\  
{\scriptsize scanalyzer (20)}       &{\scriptsize 1}       &{\scriptsize 1}       &{\scriptsize 1}       &{\scriptsize --- }       &{\scriptsize 2}       &{\scriptsize 1}       &{\scriptsize 3}       &{\scriptsize 1}       &{\scriptsize 4}       &{\scriptsize 1}       &{\scriptsize 3}       &{\scriptsize 1}       &{\scriptsize 3}       &{\scriptsize 1}       &{\scriptsize 3}       &{\scriptsize 1}       &{\scriptsize 3}       &{\scriptsize 1}       &{\scriptsize 6}       &{\scriptsize 1}       &{\scriptsize 6}       &{\scriptsize 1}       &{\scriptsize 9}       &{\scriptsize 1}       &{\scriptsize\bf 10}       &{\scriptsize 1}       &{\scriptsize 3}       &{\scriptsize 1}       &{\scriptsize 3}       &{\scriptsize 1}       &{\scriptsize 11}       &{\scriptsize 1} &{\scriptsize 8} &{\scriptsize 3}  &{\scriptsize 1}\\  
{\scriptsize tidybot (47)}       &{\scriptsize\bf 16}       &{\scriptsize\bf 10}       &{\scriptsize 7}       &{\scriptsize 1}       &{\scriptsize 7}       &{\scriptsize 1}       &{\scriptsize\bf 16}       &{\scriptsize\bf 10}       &{\scriptsize\bf 16}       &{\scriptsize\bf 10}       &{\scriptsize 8}       &{\scriptsize 1}       &{\scriptsize 8}       &{\scriptsize 1}       &{\scriptsize 8}       &{\scriptsize 1}       &{\scriptsize 8}       &{\scriptsize 1}       &{\scriptsize 13}       &{\scriptsize 9}       &{\scriptsize 13}       &{\scriptsize 9}       &{\scriptsize 7}       &{\scriptsize 7}       &{\scriptsize 7}       &{\scriptsize 7}       &{\scriptsize 8}       &{\scriptsize 7}       &{\scriptsize 7}       &{\scriptsize 7}       &{\scriptsize 16}       &{\scriptsize 10} &{\scriptsize 24} &{\scriptsize 13}  &{\scriptsize 0}\\  
{\scriptsize \bf trucks (2)}       &{\scriptsize\bf 2}       &{\scriptsize --- }       &{\scriptsize --- }       &{\scriptsize --- }       &{\scriptsize --- }       &{\scriptsize --- }       &{\scriptsize\bf 2}       &{\scriptsize --- }       &{\scriptsize\bf 2}       &{\scriptsize --- }       &{\scriptsize --- }       &{\scriptsize --- }       &{\scriptsize\bf 2}       &{\scriptsize --- }       &{\scriptsize\bf 2}       &{\scriptsize --- }       &{\scriptsize\bf 2}       &{\scriptsize --- }       &{\scriptsize\bf 2}       &{\scriptsize --- }       &{\scriptsize\bf 2}       &{\scriptsize --- }       &{\scriptsize --- }       &{\scriptsize --- }       &{\scriptsize --- }       &{\scriptsize --- }       &{\scriptsize --- }       &{\scriptsize --- }       &{\scriptsize --- }       &{\scriptsize --- }       &{\scriptsize 2}       &{\scriptsize 0} &{\scriptsize 2} &{\scriptsize 0}  &{\scriptsize 0}\\  
{\scriptsize maintenance (5)}       &{\scriptsize 5}       &{\scriptsize\bf 5}       &{\scriptsize 5}       &{\scriptsize 3}       &{\scriptsize 5}       &{\scriptsize 3}       &{\scriptsize 5}       &{\scriptsize\bf 5}       &{\scriptsize 5}       &{\scriptsize\bf 5}       &{\scriptsize 5}       &{\scriptsize\bf 5}       &{\scriptsize 5}       &{\scriptsize 4}       &{\scriptsize 5}       &{\scriptsize\bf 5}       &{\scriptsize 5}       &{\scriptsize\bf 5}       &{\scriptsize 5}       &{\scriptsize\bf 5}       &{\scriptsize 5}       &{\scriptsize\bf 5}       &{\scriptsize 5}       &{\scriptsize\bf 5}       &{\scriptsize 5}       &{\scriptsize\bf 5}       &{\scriptsize 5}       &{\scriptsize\bf 5}       &{\scriptsize 5}       &{\scriptsize\bf 5}       &{\scriptsize 5}       &{\scriptsize 5} &{\scriptsize 0} &{\scriptsize 0}  &{\scriptsize 5}\\  
{\scriptsize Parking (40)}       &{\scriptsize --- }       &{\scriptsize --- }       &{\scriptsize --- }       &{\scriptsize --- }       &{\scriptsize --- }       &{\scriptsize --- }       &{\scriptsize --- }       &{\scriptsize --- }       &{\scriptsize --- }       &{\scriptsize --- }       &{\scriptsize --- }       &{\scriptsize --- }       &{\scriptsize --- }       &{\scriptsize --- }       &{\scriptsize --- }       &{\scriptsize --- }       &{\scriptsize --- }       &{\scriptsize --- }       &{\scriptsize --- }       &{\scriptsize --- }       &{\scriptsize --- }       &{\scriptsize --- }       &{\scriptsize 38}       &{\scriptsize --- }       &{\scriptsize\bf 39}       &{\scriptsize --- }       &{\scriptsize --- }       &{\scriptsize --- }       &{\scriptsize --- }       &{\scriptsize --- }       &{\scriptsize 39}       &{\scriptsize 0} &{\scriptsize 2} &{\scriptsize 0}  &{\scriptsize 0}\\  
{\scriptsize floortile (14)}       &{\scriptsize --- }       &{\scriptsize --- }       &{\scriptsize --- }       &{\scriptsize --- }       &{\scriptsize --- }       &{\scriptsize --- }       &{\scriptsize 1}       &{\scriptsize --- }       &{\scriptsize 2}       &{\scriptsize --- }       &{\scriptsize --- }       &{\scriptsize --- }       &{\scriptsize 1}       &{\scriptsize --- }       &{\scriptsize 2}       &{\scriptsize --- }       &{\scriptsize\bf 3}       &{\scriptsize --- }       &{\scriptsize --- }       &{\scriptsize --- }       &{\scriptsize 1}       &{\scriptsize --- }       &{\scriptsize 1}       &{\scriptsize --- }       &{\scriptsize 1}       &{\scriptsize --- }       &{\scriptsize --- }       &{\scriptsize --- }       &{\scriptsize --- }       &{\scriptsize --- }       &{\scriptsize 3}       &{\scriptsize 0} &{\scriptsize 8} &{\scriptsize 0}  &{\scriptsize 0}\\  
{\scriptsize barman (22)}       &{\scriptsize\bf 2}       &{\scriptsize\bf 1}       &{\scriptsize --- }       &{\scriptsize --- }       &{\scriptsize --- }       &{\scriptsize --- }       &{\scriptsize\bf 2}       &{\scriptsize\bf 1}       &{\scriptsize\bf 2}       &{\scriptsize\bf 1}       &{\scriptsize 1}       &{\scriptsize\bf 1}       &{\scriptsize 1}       &{\scriptsize\bf 1}       &{\scriptsize 1}       &{\scriptsize\bf 1}       &{\scriptsize 1}       &{\scriptsize\bf 1}       &{\scriptsize\bf 2}       &{\scriptsize\bf 1}       &{\scriptsize\bf 2}       &{\scriptsize\bf 1}       &{\scriptsize\bf 2}       &{\scriptsize\bf 1}       &{\scriptsize\bf 2}       &{\scriptsize\bf 1}       &{\scriptsize --- }       &{\scriptsize --- }       &{\scriptsize --- }       &{\scriptsize --- }       &{\scriptsize 3}       &{\scriptsize 1} &{\scriptsize 1} &{\scriptsize 1}  &{\scriptsize 0}\\  
         \hline
    \end{tabularx}
    \caption{\label{table:plansDoms}Each column represents a configuration of SAT encoding and SAT solving and shows two numbers: the number of problems for which the cost was improved, and the number of problems for which a certain cost was proved optimal.
The first column has the domain name and the number of instances for which Fast Downward was able to compute an upper bound, and the domain name is bold if it has instances with 0-cost actions.
Columns 2-4 represent data problems solved with Madagascar's SAT-solver, where each column represents one encoding.
The next 12 columns represent data for problems solved using Kissat, with and without symmetry breaking, and for the two configurations of Kissat, SAT and UNSAT.
The second to last column represents the number of different problems whose initial cost was improved by all combinations, and those proven to be optimal.
The last column shows \begin{enumerate*}\item how many problems were optimally solved by Fast Downward using the LM-cut heuristic, \item on how many problems does Algorithm~\ref{alg:optimalSATPlan} match the optimal cost as computed by LM-cut, and \item for how many instances could Algorithm~\ref{alg:optimalSATPlan} prove a cost is optimal, where LM-cut failed.\end{enumerate*}}
\end{table*}
}
 To experimentally test the above completeness thresholds, we devise a simple SAT-based encoding of planning with action costs.
The core idea of this encoding is to embed action costs into the transition relation by compiling them to their binary representation, effectively keeping track of the plan cost as a part of the state.
Previously, more 
Consider the following compilation of a factored system.
\begin{mydef}[Augmented System]
\label{def:systemcost}
Let, for a natural number $n$, $\dom_n$ denote the indexed set of state variable $\{\vertexa,\vertexb,\ldots,\vertexgen_{\lceil\log n\rceil}\}$.
Let $\state^n_i$ denote the state defined by assigning all the state variables $\dom_n$, s.t.\ their assignments binary encode the natural number $i$, where the index of each variable from $\dom_n$ represents its endianess.
Note:$\state^{n}_i$ is well defined for $0\leq i\leq 2^{\lceil \log n \rceil} - 1$.
For an action $\act$, natural numbers $C$, $c$, and $i$, the augmented action $\act_{i,c}^C$, is defined as $(p\uplus\state^C_i,e\uplus\state^C_{i+c})$.
For a factored system $\delta$, a natural number $C$, and a function $f$ mapping elements of $\delta$ to natural numbers, the augmented factored system $\delta_f^C$ is defined as $\{\act_{i,f(\act)}^C \mid \act\in\delta \wedge 0\leq i\leq C - f(\act)\}$.
\end{mydef}
Intuitively, interpreting the function $f$ as a cost function for actions, the augmented factored system is a cost bounded version of the given system, where paths can have at most cost $C$.
This is shown in the following example.
\begin{myeg}
\label{eg:systemcost}
Consider the factored system and the cost function from Example~\ref{eg:dnotCT}.
The augmented system $\delta_\cost^2$ would be $\{(\{\overline{\vara,\varb,\vertexa,\vertexb}\},\{\vara,\overline{\varb,\vertexa},\vertexb\}),$ $ (\{\overline{\vara,\varb,\vertexa},\vertexb\},\{\vara,\overline{\varb},\vertexa,\overline\vertexb\}), (\{\vara,\overline{\varb,\vertexa,\vertexb}\},\{\overline{\vara},\varb,$ $\overline\vertexa,\vertexb\}),
(\{\vara,\overline{\varb,\vertexa},\vertexb\},\{\overline{\vara},\varb,\vertexa,\overline\vertexb\})\}$.
\end{myeg}
Note that the factored system in the above example will only have paths that, when mapped to the original system, will have a cost of at most 2.
Indeed, we have the following theorem which shows how searching for an action sequence whose cost is bounded can be done by searching for any action sequence.
\begin{mythm}
\label{thm:systemcost}
For a system $\delta$, a mapping $\cost$ from $\delta$ to natural numbers, states $\state,\state'\in\uniStates(\delta)$, and natural numbers $l$ and $i$, there is an action sequence $\as\in\ases\delta$ s.t.\ $\ases\cost(\as)\leq C$, $\cardinality{\as} = l$, and $\state'\subseteq\exec{\state}{\as}$ iff there is an action sequence $\as_C\in\ases{\delta_\cost^C}$ s.t.\ $\cardinality{\as_C} = l$, and $\state'\subseteq\exec{\state\uplus\state_i^{C+i}}{\as_C}$
\end{mythm}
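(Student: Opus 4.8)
The plan is to reduce the biconditional to a single-step correspondence between the two systems and then lift it to sequences by induction on the length $l$. The heart of the argument is that the fresh variables $\dom_C$ act as a faithful odometer for the accumulated cost: if the original variables currently hold a state $\state$ and the counter holds value $j$, then executing the augmented action $\act_{j,c}^C$ (with $c=\cost(\act)$) at $\state\uplus\state_j^C$ yields exactly $\exec{\state}{\act}\uplus\state_{j'}^C$, where $j'=j+c$ when $\act$ is applicable at $\state$ and $j'=j$ otherwise. This ``lifting lemma'' follows directly from Definition~\ref{def:systemcost} together with the precedence rule for $\uplus$: the augmented precondition $p\uplus\state_j^C$ is satisfied iff the original precondition holds \emph{and} the counter equals $j$, and the augmented effect overwrites the original variables by $e$ and the counter by $j+c$. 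Because the counter variables are disjoint from $\dom(\delta)$ while $\dom(\state')=\dom(\delta)$, they never interfere with whether $\state'\subseteq\cdot$; the surplus high bits implicit in $\state_i^{C+i}$ stay untouched and may be treated as leading zeros.

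For the left-to-right direction I would take a solution $\as=[\act^{(1)},\dots,\act^{(l)}]$ with $\ases\cost(\as)\leq C$ and construct $\as_C$ by replacing each $\act^{(t)}$ with $\act^{(t)}_{j_{t-1},\cost(\act^{(t)})}$, where $j_{t-1}$ is the counter value reached after the first $t-1$ steps, with $j_0=i$. The only nontrivial check is that each such augmented action is a legitimate member of the augmented system, i.e.\ that $j_{t-1}+\cost(\act^{(t)})$ stays within the bound fixed by Definition~\ref{def:systemcost}; this holds because $j_{t-1}$ equals $i$ plus the sum of the costs of the earlier actions that fired, so $j_{t-1}+\cost(\act^{(t)})\leq i+\ases\cost(\as)\leq i+C$. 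Iterating the lifting lemma then shows that the restriction of $\exec{\state\uplus\state_i^{C+i}}{\as_C}$ to $\dom(\delta)$ equals $\exec{\state}{\as}\supseteq\state'$, while $\cardinality{\as_C}=l$ is immediate.

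For the right-to-left direction I would project each augmented action in $\as_C$ back to its underlying action $\act\in\delta$, obtaining a sequence $\as$ of the same length $l$; the lifting lemma, iterated, again equates the $\dom(\delta)$-restriction of $\exec{\state\uplus\state_i^{C+i}}{\as_C}$ with $\exec{\state}{\as}$, so $\state'$ is still reached. The remaining obligation, $\ases\cost(\as)\leq C$, is read off from the terminal counter value, which the membership constraint $0\leq j\leq C-f(\act)$ keeps bounded. I expect this last point to be the main obstacle: $\ases\cost(\as)$ sums the costs of \emph{all} $l$ actions, whereas the counter advances only on the steps whose augmented action actually fires, so equating the accumulated cost with the terminal counter minus $i$ requires arguing that every projected action is applicable in turn, i.e.\ that no augmented step stutters. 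This is precisely where carrying the start counter $i$ as a varying parameter earns its keep: peeling the first action off $\as_C$ shifts $i$ by that action's cost and reduces to a strictly shorter instance, so the invariant ``terminal counter $=i+\ases\cost(\as)$'' can be established step by step under the applicability assumption guaranteed by the encoding.
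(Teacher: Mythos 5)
You take essentially the same route as the paper: a one-step lifting fact relating $\act$ executed at $\state$ with counter value $j$ to $\act_{j,\cost(\act)}^C$ executed at $\state\uplus\state_j^C$, extended along the sequence by induction, with the budget $C$ and the start counter $i$ as shifting parameters; your explicit invariant ``counter $=i$ plus the costs of the fired actions'' is the unrolled form of the paper's recursion, which peels the head action and substitutes $C-\cost(\act)$ for $C$ and $\cost(\act)$ for $i$. One local slip in your left-to-right direction: membership of $\act_{j,c}^C$ in $\delta_\cost^C$ requires $j\leq C-c$ by Definition~\ref{def:systemcost}, but your inequality only yields $j_{t-1}+\cost(\act^{(t)})\leq i+C$. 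For $i=0$ (the only case the paper actually uses downstream, in Corollary~\ref{cor:probcost}) this coincides with what is needed, but for $i>0$ your construction with absolute counter indices does not stay inside $\delta_\cost^C$; the paper sidesteps this by keeping indices relative, always using index-$0$ actions of the shrinking system.

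The genuine gap is the one you named and then waved away: there is no ``applicability assumption guaranteed by the encoding.'' Under Definition~\ref{def:exec} an inapplicable action is a no-op, and $\ases{\delta_\cost^C}$ contains sequences with steps whose counter precondition mismatches the current counter, or whose underlying precondition $p$ fails. Such steps stutter in the augmented run while their projections may fire in $\delta$ (counter mismatch with $p$ satisfied), so iterating your lifting lemma no longer equates the two runs; and since $\ases\cost$ charges every action whether or not it fires, the terminal counter does not bound $\ases\cost(\as)$. Stuttering steps also cannot simply be dropped, because $\cardinality{\as}=l$ is required exactly, and your $(C,i)$-recursion does not repair the problem: a stuttering head leaves the counter at $i$ yet still costs $\cost(\act')$, breaking the budget bookkeeping. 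Concretely, take $\delta=\{\act\}$ with $\act$'s precondition false at $\state$, $\cost(\act)=3$, $C=5$, $l=2$, $\state'=\state$: two stuttering augmented copies of $\act$ witness the right-hand side, while the only length-$2$ sequence over $\delta$ costs $6$ --- so the obligation you deferred cannot be discharged without excluding stuttering runs. You are in good company: the paper's own step case silently posits that the head of $\as_C$ has the form ${\act'}_{i,\cost(\act')}$, i.e.\ counter-matched and firing, so your proposal faithfully reproduces the paper's argument together with its blind spot; but as written, your right-to-left direction rests on a guarantee the definitions do not provide, and that is precisely the step a complete proof would have to handle.
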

\begin{proof}
($\Rightarrow$) We prove this by induction on $\as$.
The base case is trivial.
The step case states that $\as=[\act]\cat\as_0$, and the induction hypothesis states that the theorem statement applies to $\as_0$.
Accordingly, we can obtain an action sequence $\as_C$ by applying the induction hypothesis, after substituting $\exec{\state}{\act}$ for $\state$, $\exec{\state}{\as}$ for $\state'$, $\cardinality{\as_0}$ for $l$, $C-\cost(\act)$ for $C$, and $\cost(\act)$ for $i$, s.t.\ $\as_C\in\ases{\delta_\cost^C}$, $\cardinality{\as_C} = \cardinality{\as_0}$, and $\exec{\state}{\as}\subseteq\exec{\exec{\state}{\act}\uplus\state_{\cost(\act)}^C}{\as_C}$.
Since $\exec{\state}{\act}\uplus\state_{\cost(\act)}^C = \exec{\state\uplus\state_{0}^C}{\act_{0,\cost(\act)}^C}$, we have our proof.

($\Leftarrow$) Before we prove this direction, let $\proj{\state}{\vs}$ denote the projection of a state on a set of variables $\vs$, i.e.\ $\{v\mapsto b \mid v \in \vs\wedge v\mapsto b \in \state\}$.
Our proof for this direction of the theorem statement is by induction on $\as_C$.
Again, the base case is trivial.
The step case states that $\as_C=[\act]\cat\as_0$, and the induction hypothesis states that the theorem statement applies to $\as_0$.
Note that there is an action $\act'\in\delta$ s.t.\ $\act = {\act'}_{i,\cost(\act)}^{C+i}$.
We can obtain an action sequence $\as$ by applying the induction hypothesis, after substituting $\exec{\state}{\act'}$ for $\state$, $\proj{\exec{\state}{\as_C}}{\dom(\delta)}$ for $\state'$, $\cardinality{\as_0}$ for $l$, $C-\cost(\act')$ for $C$, and $\cost(\act')$ for $i$, s.t.\ $\as\in\ases{\delta}$, $\cardinality{\as} = \cardinality{\as_0}$, $\ases\cost(\as) \leq C - \cost(\act')$ and $\exec{\state}{\as}\subseteq\exec{\exec{\state}{\act}\uplus\state_{\cost(\act)}^C}{\as_C}$.
Since $\exec{\state}{\act'}\uplus\state_{\cost(\act)}^C = \exec{\state\uplus\state_{0}^C}{\act}$, we have our proof.
\end{proof}

The theorem above enables solving a bounded cost planning problem with satisficing planning methods.
\begin{mydef}[Augmented Problem]
\label{def:problemcost}
For a planning problem $\planningproblem$, a natural number $C$, the augmented planning problem $\planningproblem^C$ is defined as $(\delta_\cost^C, \cost, I\uplus\state_0^C, G)$.
\end{mydef}
\begin{mycor}
\label{cor:probcost}
For a system $\planningproblem$ and a natural number $l$, there is a solution $\as$ for $\planningproblem$ s.t.\ $\ases\cost(\as)\leq C$ and $\cardinality{\as} = l$ iff there is a solution $\as_C$ for $\planningproblem^C$ s.t.\ $\cardinality{\as_C} = l$.
\end{mycor}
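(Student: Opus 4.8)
The plan is to derive Corollary~\ref{cor:probcost} as a direct specialization of Theorem~\ref{thm:systemcost}, once the two notions of ``solution'' are unfolded. By Definition~\ref{def:planningproblem}, $\as$ is a solution for $\planningproblem = (\delta,\cost,I,G)$ exactly when $\as\in\ases\delta$ and $G\subseteq\exec{I}{\as}$; and by Definition~\ref{def:problemcost}, $\as_C$ is a solution for $\planningproblem^C = (\delta_\cost^C,\cost,I\uplus\state_0^C,G)$ exactly when $\as_C\in\ases{\delta_\cost^C}$ and $G\subseteq\exec{I\uplus\state_0^C}{\as_C}$. Thus the two sides of the claimed equivalence are nothing but the two sides of Theorem~\ref{thm:systemcost} with the goal taken as the target state.

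Concretely, I would instantiate Theorem~\ref{thm:systemcost} with $\state := I$, $\state' := G$, and, crucially, $i := 0$. With $i = 0$ the augmented initial fragment $\state_i^{C+i}$ becomes $\state_0^{C+0} = \state_0^{C}$, so the start state $\state\uplus\state_i^{C+i}$ of the augmented side coincides with $I\uplus\state_0^C$, precisely the initial state of $\planningproblem^C$. Under this instantiation the forward condition of the theorem --- namely $\as\in\ases\delta$, $\ases\cost(\as)\leq C$, $\cardinality{\as}=l$, and $G\subseteq\exec{I}{\as}$ --- says exactly that $\as$ is a length-$l$ solution of $\planningproblem$ of cost at most $C$, while the backward condition --- $\as_C\in\ases{\delta_\cost^C}$, $\cardinality{\as_C}=l$, and $G\subseteq\exec{I\uplus\state_0^C}{\as_C}$ --- says exactly that $\as_C$ is a length-$l$ solution of $\planningproblem^C$. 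Reading the theorem's biconditional through these two translations yields the corollary.

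The single point that requires care --- and which I expect to be the only real obstacle --- is that Theorem~\ref{thm:systemcost} is stated for a total target state $\state'\in\uniStates(\delta)$, whereas the goal $G$ of a planning problem is in general partial, with $\dom(G)\subseteq\dom(\delta)$. To close this gap I would observe that the theorem's proof never appeals to totality of $\state'$: in both directions the target state enters only through containments of the form $\state'\subseteq\exec{\cdot}{\cdot}$, which remain well defined and correct for any $\state'$ with $\dom(\state')\subseteq\dom(\delta)$. Hence the theorem applies verbatim with $\state' := G$, and no separate handling of the partial goal is needed. Everything else is a mechanical unfolding of Definitions~\ref{def:planningproblem} and~\ref{def:problemcost}, so the remainder of the argument is routine.
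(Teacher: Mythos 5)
Your proof is correct and is essentially the paper's own argument: the paper states Corollary~\ref{cor:probcost} without an explicit proof precisely because it is the instantiation of Theorem~\ref{thm:systemcost} with $\state:=\init$, $\state':=\goal$, and $i:=0$ that you spell out, after unfolding the definitions of a solution for $\planningproblem$ and for $\planningproblem^C$. Your additional observation that the theorem's hypothesis $\state'\in\uniStates(\delta)$ must be relaxed to partial goals---and that its inductive proof only ever uses $\state'$ through containments of the form $\state'\subseteq\exec{\cdot}{\cdot}$, so it applies verbatim with $\state':=\goal$---correctly repairs a minor typing slip in the theorem's statement rather than deviating from the intended route.
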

 \renewcommand{\mohammad}[1]{}
\newcommand{\solve}{solve}
\newcommand{\factor}{factor}
\subsection{Any-Time SAT-Based Optimal Planning}
To find an optimal plan, we need to iteratively decrement the cost bound until no plan is found.
A challenge to doing that is that the size of the augmented system is a factor of $C$ larger than the original system, where $C$ is the cost upper bound.
One way to circumvent this size increase employs the following proposition.
\begin{myprop}
\label{prop:gcdCosts}
For a set of natural numbers $N$, let $\gcd(N)$ denote their greatest common divisor.
For a planning problem $\planningproblem$ let $\gcd(\planningproblem)$ denote $\gcd(\{\cost(\act)\mid\act\in\delta\})$.
An action sequence $\as$ is a solution for $\planningproblem$ with cost $C$ iff $\as$ is a solution for $\planningproblem/\gcd(\planningproblem)$ with cost $\lfloor C/\gcd(\planningproblem)\rfloor$.
\end{myprop}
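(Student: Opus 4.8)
The plan is to decompose the claim into two essentially independent observations: that $\planningproblem$ and $\planningproblem/g$ have identical solution sets, and that their cost functions differ only by the multiplicative factor $g$, where I write $g$ for $\gcd(\planningproblem)$.

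First I would note that, by Definition~\ref{def:planningproblem}, $\planningproblem$ and $\planningproblem/g$ share the same factored system $\delta$, the same initial state $\init$, and the same goal $\goal$; only the cost mapping is replaced (by $f\circ\cost$ with $f(n)=\lfloor n/g\rfloor$). Consequently solution-hood -- the predicate $\goal\subseteq\exec{\init}{\as}$ for $\as\in\ases\delta$ -- is literally the same for both problems, so $\as$ is a solution for $\planningproblem$ iff it is a solution for $\planningproblem/g$. It therefore remains only to relate the two costs.

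Second I would establish the key divisibility fact: since $g$ is the greatest common divisor of $\{\cost(\act)\mid\act\in\delta\}$, it divides $\cost(\act)$ for every $\act\in\delta$. By induction on $\as$ using the recursion $\ases\cost([])=0$ and $\ases\cost([\act]\cat\as_0)=\cost(\act)+\ases\cost(\as_0)$, it follows that $g$ divides $\ases\cost(\as)$ for every $\as\in\ases\delta$; in particular, if $\ases\cost(\as)=C$ then $g\mid C$ and hence $\lfloor C/g\rfloor = C/g$. The same induction applied to the scaled mapping $f\circ\cost$ shows that the cost of $\as$ in $\planningproblem/g$ equals $\sum_{\act}\lfloor\cost(\act)/g\rfloor = \sum_{\act}\cost(\act)/g = \ases\cost(\as)/g$, where each summand's floor is exact precisely because $g\mid\cost(\act)$.

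Combining the two observations yields the equivalence in both directions. For the forward direction, a solution $\as$ of $\planningproblem$ with cost $C$ is a solution of $\planningproblem/g$ whose cost is $C/g=\lfloor C/g\rfloor$. For the backward direction, a solution $\as$ of $\planningproblem/g$ with cost $\lfloor C/g\rfloor$ is a solution of $\planningproblem$ whose cost is $g\cdot\lfloor C/g\rfloor$; the divisibility fact (every realizable $\planningproblem$-cost is a multiple of $g$) identifies this value with $C$. The only point demanding care is exactly this interaction between the floor operator and divisibility: the floor in $\lfloor C/g\rfloor$ is harmless only because every plan cost is a multiple of $g$, so I would make sure to discharge the divisibility lemma before appealing to exactness of the division in either direction.
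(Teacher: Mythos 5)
The paper states Proposition~\ref{prop:gcdCosts} without any proof, so there is no author argument to compare against; judging your proposal on its own, the decomposition is the natural one and its two lemmas are sound: solution-hood depends only on $\delta$, $\init$, $\goal$ and is untouched by rescaling the cost mapping, and since $g=\gcd(\planningproblem)$ divides every $\cost(\act)$, each per-action floor is exact, whence by induction the cost of $\as$ in $\planningproblem/g$ is exactly $\ases\cost(\as)/g$ and $g$ divides every realizable cost. These are precisely the right ingredients.

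The gap is in how you discharge the floor in the backward direction. You conclude that a solution of $\planningproblem/g$ with cost $\lfloor C/g\rfloor$ has original cost $g\cdot\lfloor C/g\rfloor$ and then ``identify this value with $C$'' via the divisibility lemma — but that lemma only says the original cost is \emph{some} multiple of $g$; the identity $g\lfloor C/g\rfloor=C$ holds iff $g\mid C$, and $C$ is an arbitrary given number, not assumed to be a realizable cost. Under your exact-cost reading the iff is in fact false when $g\nmid C$ (take $g=2$, $C=5$: a solution of $\planningproblem/2$ with cost $\lfloor 5/2\rfloor=2$ has original cost $4\neq 5$), so your proof silently adds the hypothesis $g\mid C$. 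The reading consistent with how the proposition is actually deployed — Algorithm~\ref{alg:optimalSATPlan} applies it with $C=\ases\cost(\as)-1$, which is generally \emph{not} a multiple of $g$, and Corollary~\ref{cor:probcost} speaks of $\ases\cost(\as)\leq C$ — is ``cost at most $C$'', and there the floor does real work. The correct discharge is a one-line integrality step you already have all the pieces for: for an integer $k$, $k\leq C/g$ iff $k\leq\lfloor C/g\rfloor$, applied with $k=\ases\cost(\as)/g$; substituting this for your final identification repairs the proof. You might also flag the degenerate case $g=0$ (all actions of cost $0$), which both your argument and the paper's statement leave undefined.
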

Using the above proposition to scale down the action cost bound dramatically limits the blow up in the size of the augmented factored systems for many domains.
\mohammad{, as shown in Figure~\ref{}.}
Another way to limit the size of the augmented factored system is by factoring the actions of the augmented system.
\mohammad{This too reduces in the size of the augmented factored systems, as shown in Figure~\ref{}.}

Algorithm~\ref{alg:optimalSATPlan} is the overall algorithm that we use.
It is an any time algorithm that, given an initial plan, computes plans with improving costs until the optimal cost is reached.
That algorithm assumes that there is a SAT-based procedure $\solve$ that computes a satisfiscing plan, given a planning problem and a horizon.
It also assumes that there is a procedure $\factor$ that factors actions in a planning problem, i.e.\ if there are two actions $\acta$ and $\actb$ s.t.\ $\acta=(\{\v\}\cup p, e)$ and $\actb=(\{\overline\v\}\cup p, e)$ in $\delta$, both of the actions are removed and replaced by $(p,e)$, where this is greedily done until a fixed point is reached.
Lastly, it also uses a function to compute the completeness threshold with every iteration since the completeness threshold might change depending on the current plan cost, if the problem has all unit cost actions (Proposition~\ref{prop:allunit}), or if it has no 0-cost actions (Proposition~\ref{prop:nozero}).
That function is specified in the following corollary.
\begin{mycor}
\label{cor:CT}
For a planning problem $\planningproblem$ and a solution $\as$ for $\planningproblem$, let
$\completenessthreshold(\as,\planningproblem)$ be $\cardinality{\as}$ if $\cost(\act)=1$ for every $\act\in\delta$, $\lfloor\ases\cost(\as)/c_\min\rfloor$ if $\cost(\act)\neq 0$ for every $\act\in\delta$, and $\ell(\delta)$ otherwise.
A completeness threshold for $\planningproblem$ is $\completenessthreshold(\as,\planningproblem)$.
\end{mycor}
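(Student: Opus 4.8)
The plan is to prove the corollary by a straightforward case analysis that mirrors the three-way definition of $\completenessthreshold(\as,\planningproblem)$, invoking in each branch the matching result already established in the excerpt. First I would recall the definition of a completeness threshold: a natural number is a completeness threshold for $\planningproblem$ exactly when every solution admits an equally-cheap solution of length at most that number. The key observation is that each of the three values appearing in the definition of $\completenessthreshold$ has already been shown to be a completeness threshold under precisely the hypothesis that guards its branch, so no new argument is needed beyond checking that the guards align with the cited hypotheses.

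Concretely, I would split on the conditions in the order they appear. In the first case, where $\cost(\act) = 1$ for every $\act \in \delta$, the function returns $\cardinality{\as}$, and Proposition~\ref{prop:allunit} states exactly that $\cardinality{\as}$ is a completeness threshold under this hypothesis. In the second case, where the first condition fails but $\cost(\act) \neq 0$ for every $\act \in \delta$, the function returns $\lfloor \ases\cost(\as)/c_\min \rfloor$, which Proposition~\ref{prop:nozero} identifies as a completeness threshold under the nonzero-cost hypothesis. In the remaining case some action has cost $0$, and the function returns $\ell(\delta)$; Theorem~\ref{thm:ellCT} guarantees that $\ell(\delta)$ is a completeness threshold for any planning problem, and hence in particular for $\planningproblem$. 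Since the three cases are exhaustive and, in each, $\completenessthreshold(\as,\planningproblem)$ coincides with a value already certified as a completeness threshold, the corollary follows.

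I do not anticipate a genuine obstacle here, since the corollary is essentially a repackaging of Propositions~\ref{prop:allunit} and~\ref{prop:nozero} together with Theorem~\ref{thm:ellCT}. The only point requiring care is that the branches of the definition must be read as an \ifnew/\elsenew cascade, so that the second branch is taken only when the first fails and the third only when both fail; this ensures the hypothesis guarding each branch is exactly the one under which the cited result applies. It is worth observing that the fallback branch is never strictly necessary for correctness, as Theorem~\ref{thm:ellCT} already yields a completeness threshold in all three cases, but the first two branches are retained because, by Proposition~\ref{prop:diamlesublistDiamleell} and the relationship between cost and length, they can give strictly tighter bounds when their hypotheses hold.
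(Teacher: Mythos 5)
Your proof is correct and matches the paper's intent exactly: the corollary is stated without explicit proof precisely because it follows by the three-way case split you give, invoking Proposition~\ref{prop:allunit}, Proposition~\ref{prop:nozero}, and Theorem~\ref{thm:ellCT} in the respective branches (and your observation about reading the branches as a cascade is sound, noting also that when all costs are 1 the first two values coincide since $c_\min=1$). Your closing aside citing Proposition~\ref{prop:diamlesublistDiamleell} as the reason the first two branches are tighter is slightly off target---that proposition compares diameters, not plan lengths against $\ell(\delta)$---but it is a non-essential remark and does not affect the argument.
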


\SetKwIF{If}{ElseIf}{Else}{if}{}{else if}{else}{endif}
\begin{algorithm}\DontPrintSemicolon
  $\as' := \as$\;
  \KwSty{while $\as'\neq\;$ none }\;
  \ \ \ \ $\as:=\as'$\;
  \ \ \ \ $\planningproblem':=\factor((\planningproblem/\gcd(\planningproblem))^{\lfloor\ases\cost(\as) - 1/\gcd(\planningproblem)\rfloor})$\;
  \ \ \ \ $\as' := \solve(\planningproblem', \completenessthreshold(\as,\planningproblem))$\;
  \KwSty{return $\as$}\;  
  \caption{Input: plan $\as$ and a problem $\planningproblem$.}\label{alg:optimalSATPlan}
\end{algorithm}

\subsection{Experimental Evaluation}
\label{sec:experiments}
We experimentally test Algorithm~\ref{alg:optimalSATPlan} to investigate how capable it is to \begin{enumerate*} \item find plans with better costs than the initial plan, \item find plans with optimal costs, and \item show that a plan is an optimal plan.\end{enumerate*}
We implement the function $\solve$ by computing the a SAT encoding using the SAT-based planner Madagascar~\cite{rintanen:06}, where we try the three different possible encodings computed by Madagascar: the sequential, the $\forall$-step, and the $\exists$-step encodings.
To solve the formulae resulting from these encodings, we use the SAT solver of Madagascar and the state-of-the-art SAT solver Kissat~\cite{BiereFazekasFleuryHeisinger-SAT-Competition-2020-solvers}.
Furthermore, we use Kissat's two configurations: SAT and UNSAT, and we run the experiments once with adding symmetry breaking clauses using the tool BreakId~\cite{DBLP:conf/sat/Devriendt0BD16} and once without them.
The initial plans are computed by the planner Fast Downward~\cite{Helmert06} with the FF heuristic~\cite{hoffmann2001ff}.
To compute completeness threshold when there are action with 0-cost, instead of computing the sublist diameter, we use upper bounds computed using previously published methods~\cite{icaps2017,abdulaziz:2019,RD}.
The initial plan computation, completeness threshold computation, and the execution of Algorithm~\ref{alg:optimalSATPlan} are given 1800s timeout and 4GB memory limit.
As a baseline, we use Fast Downward with the LM-cut heuristic~\cite{DBLP:conf/aips/PommereningH12} to compute optimal plans, with the same time and memory limits.

Table~\ref{table:plansDoms} shows the coverage of the different configurations of SAT encoding and SAT solving.
It shows that none of the configurations is consistently the best in all domains, whether in terms of proving optimality, or improving the initial cost.
Nonetheless, it seems to always that configurations using Kissat as a SAT solver outperform the configurations where Madagascar's SAT solver is used in more domains.
Also it seems that the different configurations are complementary to each other within each of the domains, which is why the total number of solved instances is better than the number of instances solved by any individual configuration in 10 domains out of 16.

Another point to note is that, overall, Algorithm~\ref{alg:optimalSATPlan} proves optimality for less problems than Fast Downward with the LM-cut heuristic.
Interestingly, nonetheless, Algorithm~\ref{alg:optimalSATPlan} is able to prove optimality on instances on which LM-cut fails like in NoMystery, Hiking, Transport, Visitall, Scanalyzer, and Maintenance.
We note that all of these domains have no 0-cost actions. 
Furthermore, Algorithm~\ref{alg:optimalSATPlan} is able to compute plans with costs that match those computed using LM-cut, but without being able to prove that these are the optimal costs.
This is the case in Logistics, Rover, Zeno, Satellite, Scanalyzer, and TidyBot.

To get a more fine-grained view of the quality of computed plans, the plot in Figure~\ref{fig:coscompare} shows the cost of the cheapest plan computed by all of the configurations and compares it to the cost of the initial plan.
In this plot, we have restricted ourselves to problems where the initial bound was at most 100 to preserve readability of the plots.
The problems shown in that figure show that the costs are significantly improved for many of the domains.

\begin{figure}[h]
\centering
\begin{minipage}[b]{0.45\textwidth}
\centering
       \includegraphics[width=1\textwidth,height=0.8\textwidth]{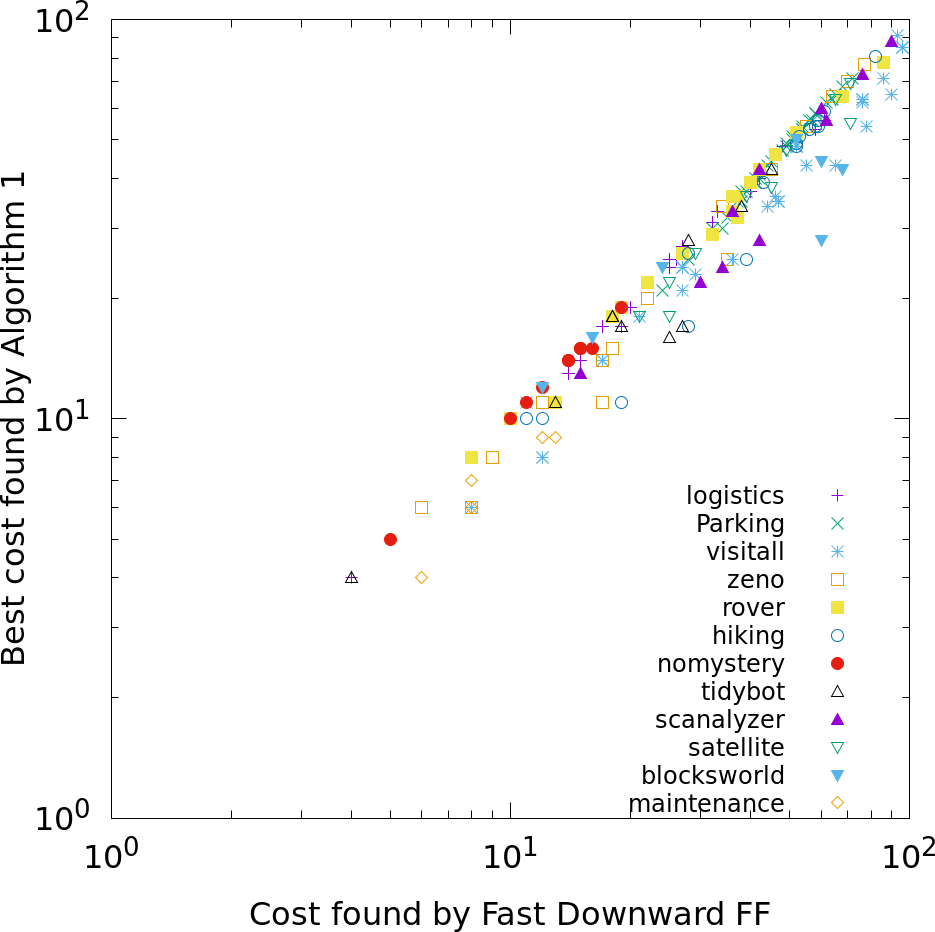}
\end{minipage}
     \caption{\label{fig:coscompare} Comparison of costs of initial plans computed by Fast Downward with FF vs Algorithm~\ref{alg:optimalSATPlan}.}
\vspace{-3ex}
\end{figure}

 \section{Discussion}
\label{sec:discussion}
In this work we have investigated different completeness thresholds for cost optimal planning.
These completeness thresholds enable more applicability of SAT-based planning techniques to cost optimal planning, in particular to problems that have actions with cost 0.
We devised a simple SAT-based technique that effectively operates by compiling away action costs into action effects.
Experimental results using our method show reasonable performance.
Although its coverage is less than state-of-the-art A* based optimal planners, using SAT-based techniques for cost optimal planning problems has multiple advantages.
E.g.\ it is very easy to obtain a certificate of optimality if the SAT solver proves a certain cost is optimal, which is a problem that recently attracted attention~\cite{eriksson2017unsolvability}.
Also, it can be easily adapted to generating different plans with the same cost, namely, by adding constraints to the SAT-encoding that prohibit a given plan, which is another interesting problem~\cite{DBLP:conf/aips/0001SUW18}.

There are multiple interesting future directions in which this work can be further pursued.
First the upper bounds could be improved by either incorporating the action costs, initial state, or goal.
Another interesting problem is to find whether there is an exponential separation between the subset diameter and sublist diameter and, if there is one, investigating methods to compute or bound the subset diameter.
The encoding can also be improved by employing approximate methods when compiling the costs, e.g.\ the method by~\citeauthor{DBLP:conf/ijcai/HoffmannGSK07}~\citeyear{DBLP:conf/ijcai/HoffmannGSK07} could be adapted to compiling costs.
Also, since our experiments show that the different combinations of SAT-encoding and SAT solving are complementary, a portfolio approach can be used to optimise the used combination for different instances.

\bibliography{short_paper}

\begin{thebibliography}{31}
\providecommand{\natexlab}[1]{#1}
\providecommand{\url}[1]{\texttt{#1}}
\providecommand{\urlprefix}{URL }
\expandafter\ifx\csname urlstyle\endcsname\relax
  \providecommand{\doi}[1]{doi:\discretionary{}{}{}#1}\else
  \providecommand{\doi}{doi:\discretionary{}{}{}\begingroup
  \urlstyle{rm}\Url}\fi

\bibitem[{Abdulaziz(2019)}]{abdulaziz:2019}
Abdulaziz, M. 2019.
\newblock Plan-Length Bounds: Beyond 1-way Dependency.
\newblock In \emph{AAAI}.

\bibitem[{Abdulaziz and Berger(2021)}]{RD}
Abdulaziz, M.; and Berger, D. 2021.
\newblock {Computing Plan-Length Bounds Using Lengths of Longest Paths}.
\newblock In \emph{AAAI}.

\bibitem[{Abdulaziz, Gretton, and Norrish(2015)}]{abdulaziz2015verified}
Abdulaziz, M.; Gretton, C.; and Norrish, M. 2015.
\newblock {Verified Over-Approximation of the Diameter of Propositionally
  Factored Transition Systems}.
\newblock In \emph{{ITP}}.

\bibitem[{Abdulaziz, Gretton, and Norrish(2017)}]{icaps2017}
Abdulaziz, M.; Gretton, C.; and Norrish, M. 2017.
\newblock {A State Space Acyclicity Property for Exponentially Tighter Plan
  Length Bounds}.
\newblock In \emph{ICAPS}.

\bibitem[{Baumgartner, Kuehlmann, and Abraham(2002)}]{baumgartner2002property}
Baumgartner, J.; Kuehlmann, A.; and Abraham, J. 2002.
\newblock {Property Checking Via Structural Analysis}.
\newblock In \emph{CAV}.

\bibitem[{Biere et~al.(1999)Biere, Cimatti, Clarke, and Zhu}]{BiereCCZ99}
Biere, A.; Cimatti, A.; Clarke, E.~M.; and Zhu, Y. 1999.
\newblock {Symbolic Model Checking without {BDDs}}.
\newblock In \emph{{TACAS}}.

\bibitem[{Biere et~al.(2020)Biere, Fazekas, Fleury, and
  Heisinger}]{BiereFazekasFleuryHeisinger-SAT-Competition-2020-solvers}
Biere, A.; Fazekas, K.; Fleury, M.; and Heisinger, M. 2020.
\newblock {CaDiCaL}, {Kissat}, {Paracooba}, {Plingeling} and {Treengeling}
  Entering the {SAT Competition 2020}.
\newblock In Balyo, T.; Froleyks, N.; Heule, M.; Iser, M.; J{\"a}rvisalo, M.;
  and Suda, M., eds., \emph{{SAT Competition} -- Solver and Benchmark
  Descriptions}.

\bibitem[{Bundala, Ouaknine, and Worrell(2012)}]{bundala2012magnitude}
Bundala, D.; Ouaknine, J.; and Worrell, J. 2012.
\newblock On the magnitude of completeness thresholds in bounded model
  checking.
\newblock In \emph{LICS}.

\bibitem[{B{\"{u}}ttner and Rintanen(2005)}]{DBLP:conf/aips/ButtnerR05}
B{\"{u}}ttner, M.; and Rintanen, J. 2005.
\newblock {Satisfiability Planning with Constraints on the Number of Actions}.
\newblock In \emph{ICAPS}.

\bibitem[{Clarke, Emerson, and Sifakis(2009)}]{clarke2009turing}
Clarke, E.~M.; Emerson, E.~A.; and Sifakis, J. 2009.
\newblock {Turing Lure: Model Checking--Algorithmic Verification and
  Debugging}.
\newblock \emph{Communications of the ACM} .

\bibitem[{Devriendt et~al.(2016)Devriendt, Bogaerts, Bruynooghe, and
  Denecker}]{DBLP:conf/sat/Devriendt0BD16}
Devriendt, J.; Bogaerts, B.; Bruynooghe, M.; and Denecker, M. 2016.
\newblock Improved Static Symmetry Breaking for {SAT}.
\newblock In \emph{SAT}.

\bibitem[{Eriksson, R{\"{o}}ger, and Helmert(2017)}]{eriksson2017unsolvability}
Eriksson, S.; R{\"{o}}ger, G.; and Helmert, M. 2017.
\newblock Unsolvability Certificates for Classical Planning.
\newblock In Barbulescu, L.; Frank, J.; Mausam; and Smith, S.~F., eds.,
  \emph{{ICAPS}}.

\bibitem[{Fikes and Nilsson(1971)}]{fikes1971strips}
Fikes, R.~E.; and Nilsson, N.~J. 1971.
\newblock {STRIPS: A New Approach to the Application of Theorem Proving to
  Problem Solving}.
\newblock \emph{AI} .

\bibitem[{Gerevini et~al.(2009)Gerevini, Haslum, Long, Saetti, and
  Dimopoulos}]{DBLP:journals/ai/GereviniHLSD09}
Gerevini, A.; Haslum, P.; Long, D.; Saetti, A.; and Dimopoulos, Y. 2009.
\newblock {Deterministic Planning In The Fifth International Planning
  Competition: {PDDL3} And Experimental Evaluation Of The Planners}.
\newblock \emph{AI} .

\bibitem[{Giunchiglia and Maratea(2007)}]{DBLP:conf/aaai/GiunchigliaM07}
Giunchiglia, E.; and Maratea, M. 2007.
\newblock {Planning as Satisfiability with Preferences}.
\newblock In \emph{AAAI}.

\bibitem[{Helmert(2006)}]{Helmert06}
Helmert, M. 2006.
\newblock The {Fast} {Downward} Planning System.
\newblock \emph{JAIR} .

\bibitem[{Hemaspaandra et~al.(2010)Hemaspaandra, Hemaspaandra, Tantau, and
  Watanabe}]{hemaspaandra2010complexity}
Hemaspaandra, E.; Hemaspaandra, L.~A.; Tantau, T.; and Watanabe, O. 2010.
\newblock {On the Complexity of Kings}.
\newblock \emph{TCS} .

\bibitem[{Hoffmann et~al.(2007)Hoffmann, Gomes, Selman, and
  Kautz}]{DBLP:conf/ijcai/HoffmannGSK07}
Hoffmann, J.; Gomes, C.~P.; Selman, B.; and Kautz, H.~A. 2007.
\newblock {SAT Encodings of State-Space Reachability Problems in Numeric
  Domains}.
\newblock In \emph{{IJCAI}}.

\bibitem[{Hoffmann and Nebel(2001)}]{hoffmann2001ff}
Hoffmann, J.; and Nebel, B. 2001.
\newblock The FF planning system: Fast plan generation through heuristic
  search.
\newblock \emph{Journal of Artificial Intelligence Research} 14: 253--302.

\bibitem[{Katz et~al.(2018)Katz, Sohrabi, Udrea, and
  Winterer}]{DBLP:conf/aips/0001SUW18}
Katz, M.; Sohrabi, S.; Udrea, O.; and Winterer, D. 2018.
\newblock A Novel Iterative Approach to Top-k Planning.
\newblock In \emph{ICAPS}.

\bibitem[{Kautz and Selman(1992)}]{kautz:selman:92}
Kautz, H.~A.; and Selman, B. 1992.
\newblock Planning as Satisfiability.
\newblock In \emph{ECAI}.

\bibitem[{Kroening et~al.(2011)Kroening, Ouaknine, Strichman, Wahl, and
  Worrell}]{kroening2011linear}
Kroening, D.; Ouaknine, J.; Strichman, O.; Wahl, T.; and Worrell, J. 2011.
\newblock {Linear Completeness Thresholds for Bounded Model Checking}.
\newblock In \emph{CAV}.

\bibitem[{Kroening and Strichman(2003)}]{KroeningS03}
Kroening, D.; and Strichman, O. 2003.
\newblock Efficient Computation of Recurrence Diameters.
\newblock In \emph{{VMCAI}}.

\bibitem[{Leofante et~al.(2020)Leofante, Giunchiglia, {\'{A}}brah{\'{a}}m, and
  Tacchella}]{DBLP:conf/ijcai/LeofanteGAT20}
Leofante, F.; Giunchiglia, E.; {\'{A}}brah{\'{a}}m, E.; and Tacchella, A. 2020.
\newblock {Optimal Planning Modulo Theories}.
\newblock In \emph{IJCAI}.

\bibitem[{Muise, Beck, and McIlraith(2016)}]{DBLP:journals/jair/MuiseBM16}
Muise, C.~J.; Beck, J.~C.; and McIlraith, S.~A. 2016.
\newblock {Optimal Partial-Order Plan Relaxation via MaxSAT}.
\newblock \emph{JAIR} .

\bibitem[{Papadimitriou and Yannakakis(1986)}]{papadimitriou1986note}
Papadimitriou, C.~H.; and Yannakakis, M. 1986.
\newblock {A Note on Succinct Representations of Graphs}.
\newblock \emph{Information and Control} .

\bibitem[{Pardalos and Migdalas(2004)}]{pardalos2004note}
Pardalos, P.~M.; and Migdalas, A. 2004.
\newblock {A Note on the Complexity of Longest Path Problems Related to Graph
  Coloring}.
\newblock \emph{Applied Mathematics Letters} .

\bibitem[{Pommerening and Helmert(2012)}]{DBLP:conf/aips/PommereningH12}
Pommerening, F.; and Helmert, M. 2012.
\newblock {Optimal Planning for Delete-Free Tasks with Incremental LM-Cut}.
\newblock In \emph{ICAPS}.

\bibitem[{Rintanen and Gretton(2013)}]{Rintanen:Gretton:2013}
Rintanen, J.; and Gretton, C.~O. 2013.
\newblock {Computing Upper Bounds on Lengths of Transition Sequences}.
\newblock In \emph{IJCAI}.

\bibitem[{Rintanen, Heljanko, and Niemel{\"a}(2006)}]{rintanen:06}
Rintanen, J.; Heljanko, K.; and Niemel{\"a}, I. 2006.
\newblock {Planning as Satisfiability: Parallel Plans and Algorithms for Plan
  Search}.
\newblock \emph{{AI}} .

\bibitem[{Robinson et~al.(2010)Robinson, Gretton, Pham, and
  Sattar}]{robinson2010partial}
Robinson, N.; Gretton, C.; Pham, D.~N.; and Sattar, A. 2010.
\newblock {Partial Weighted MaxSAT for Optimal Planning}.
\newblock In \emph{PRICAI}.

\end{thebibliography}
\end{document}